\def\version{1}
\newif\ifthesis
\newif\ifnatbib
  \newcommand{\IEEEtitleabstractindextext}[1]{#1}
  \newcommand{\IEEEraisesectionheading}[1]{#1}
  \newcommand{\mykeywords}[1]{\textbf{keywords:} #1}
  \newcommand{\email}[1]{\href{mailto:#1}{#1}}
  \newcommand{\IEEEPARstart}[1]{#1}
  \newcommand{\mykeywords}[1]{\begin{IEEEkeywords}#1\end{IEEEkeywords}}
\title{The Hidden Convexity of Spectral Clustering\thanks{A short version of this paper previously appeared in the proceedings of the Thirtieth AAAI Conference on Artificial Intelligence \citep{BelkinRV14b}.
    Implementations of the algorithms proposed in this paper can be found at \url{https://github.com/vossj/HBR-Spectral-Clustering}.}}
\author{ 
  James Voss%
  \footnote{Department of Computer Science and Engineering, the Ohio State University} \\
  \email{vossj@cse.ohio-state.edu}
  \and
  Mikhail Belkin\footnotemark[\value{footnote}] \\
  \email{mbelkin@cse.ohio-state.edu}    
  \and
  Luis Rademacher\footnotemark[\value{footnote}] \\
  \email{lrademac@cse.ohio-state.edu}    
}
  \author{
    James Voss,
    Mikhail Belkin,
    and
    Luis Rademacher
    % \thanks{This work was supported by NSF grants IIS 1117707, CCF 1350870, and CCF 1422830.}
    \IEEEcompsocitemizethanks{\IEEEcompsocthanksitem The authors are with the Department of Computer Science, the Ohio State University, Columbus, OH, 43210. \protect\\
      E-mail: (vossj, mbelkin, lrademac)@cse.ohio-state.edu
    }% <-this stops an unwanted space
      % \thanks{Manuscript received April 19, 2005; revised August 26, 2015.}
}
\def\citep{\cite}
\newcommand{\mynewtheorem}[2]{\newtheorem{#1}[thm]{#2}}
\theoremstyle{plain}
\newtheorem{thm}{Theorem}
\theoremstyle{definition}
\theoremstyle{remark}
\def\final{1}
\newcommand{\lnote}[1]{[{\small Luis: \bf #1}]}
\newcommand{\vnote}[1]{[{\small Jimmy: \bf #1}]}
\newcommand{\mnote}[1]{[{\small Misha: \bf #1}]}
\newcommand{\anonnote}[1]{[{\small anon: \bf #1}]}
\newcommand{\sidecomment}[1]{\marginpar{\tiny #1}}
\newcommand{\details}[1]{[[#1]]}
\newcommand{\lnote}[1]{}
\newcommand{\vnote}[1]{}
\newcommand{\mnote}[1]{}
\newcommand{\anonnote}[1]{}
\newcommand{\sidecomment}[1]{}
\newcommand{\details}[1]{}
\newcommand{\abs}[1]{{\ensuremath | #1 |}}
\newcommand{\Abs}[1]{\ensuremath \left| #1 \right|}
\newcommand{\norm}[2][]{\ensuremath{\lVert #2 \rVert_{#1}}}
\newcommand{\sym}{\ensuremath{\mathrm{sym}}}
\newcommand{\rw}{\ensuremath{\mathrm{rw}}}
\newcommand{\R}{\mathbb{R}}
\newcommand{\LL}{\mathcal{L}}
\newcommand{\NN}{\mathcal{N}}
\renewcommand{\SS}{\mathcal{S}}
\DeclareMathOperator{\argmax}{arg\ max}
\DeclareMathOperator{\conv}{conv}
\DeclareMathOperator{\CUT}{Cut}
\DeclareMathOperator{\diag}{diag}
\DeclareMathOperator{\NCUT}{NCut}
\DeclareMathOperator{\RCUT}{RCut}
\DeclareMathOperator{\sign}{sign}
\DeclareMathOperator{\spn}{span}
\newcommand{\grad}{\nabla}
\renewcommand{\vec}[1]{\ensuremath{\mathbf{#1}}}
\newcommand{\suchthat}{\mathrel{|}}
\renewcommand{\norm}[2][]{\ensuremath{\lVert #2 \rVert\ifthenelse{\equal{#1}{}}{}{_{#1}}}}
\newcommand{\norms}[2][]{\ensuremath{\lVert #2 \rVert^2\ifthenelse{\equal{#1}{}}{}{_{#1}}}}
\newcommand*{\Cdot}[1][1.25]{%
  \mathpalette{\CdotAux{#1}}\bullet%
}
\newdimen\CdotAxis
\newcommand*{\CdotAux}[3]{%
  {%
    \settoheight\CdotAxis{$#2\vcenter{}$}%
    \sbox0{%
      \raisebox\CdotAxis{%
        \scalebox{#1}{%
          \raisebox{-\CdotAxis}{%
            $\mathsurround=0pt #2#3$%
          }%
        }%
      }%
    }%
    % Remove depth that arises from scaling.
    \dp0=0pt %
    % Decrease scaled height.
    \sbox2{$#2\bullet$}%
    \ifdim\ht2<\ht0 %
      \ht0=\ht2 %
    \fi
    % Use the same width as the original \cdot.
    \sbox2{$\mathsurround=0pt #2#3$}%
    \hbox to \wd2{\hss\usebox{0}\hss}%
  }%
}
\def\hbe{\ensuremath{z}}
\def\hbev{\ensuremath{\vec \hbe}}
\def\adim{d}
\def\sphere{\mathbb S}
\def\orthant{Q}
\def\porth{\orthant_+}
\def\Id{\mathcal I}
\newcommand{\vecdot}{\Cdot[0.5]}
\newcommand{\ipCanonical}[2]{#1 \mathbin{\vecdot} #2}
\newcommand{\ipCanonicalp}[2]{(\ipCanonical{#1}{#2})}
\newcommand{\fg}[1][]{F_{g_{#1}}}
\DeclareMathOperator{\dimop}{dim}
\newcommand{\mysubsection}[1]{\subsection{#1}}
\newcommand{\sfrac}[2]{#1 / #2}                % removes the fancy slash fraction and replaces it with a simple text-based fraction
\newcommand{\calS}{\mathcal{S}}
\newcommand{\One}{\mathbf 1}
\newcommand{\edim}{k}
\newcommand{\dn}{\ensuremath{n}}
\newcommand{\SCHAR}{m}
\newcommand{\slope}[2]{\ensuremath{\SCHAR_{#1}^{#2}}}
\newcommand{\LEFT}{\ell}
\newcommand{\RIGHT}{r}
\newcommand{\lslope}[1][]{\slope{#1}{\LEFT}}
\newcommand{\rslope}[1][]{\slope{#1}{\RIGHT}}
\def\myZ{z}
\def\myZv{\vec \myZ}
\def\myD{\mathfrak D}
\def\myd{\mathfrak d}
\def\Zpts{\edim}
\def\findopt{\textsc{HBRopt}}
\def\findenum{\textsc{HBRenum}}
\def\tabs{\mathrm{abs}}
\def\tgau{\mathrm{gau}}
\def\tsig{\mathrm{sig}}
\def\tht{\mathrm{ht}}
\newcommand{\myfootnote}[1]{\footnote{#1}}
\newenvironment{myalgorithm}[1][]{\begin{algorithm}[#1]}{\end{algorithm}}
\newcommand{\mycaption}[2][]{%
  \ifthenelse{\equal{#1}{}}%
  {\caption[#2]{#2}} % IF TRUE
  {\caption[#1]{#2}}} % IF FALSE
\renewenvironment{proof}[1][Proof]{\begin{IEEEproof}[#1]}{\end{IEEEproof}}
\begin{document}
%%%%%%%%%%%%%%%%%%%%%%%%%%%%%%%%%%%%%%%%%%%%%%%%%%%%%%%%%%%%%%%%%%%%%%%%%%%%
%%%  ABSTRACT AND KEY WORDS
%%%%%%%%%%%%%%%%%%%%%%%%%%%%%%%%%%%%%%%%%%%%%%%%%%%%%%%%%%%%%%%%%%%%%%%%%%%%
\ifnum\version=2
\IEEEtitleabstractindextext{%
\begin{abstract}
  In recent years, spectral clustering has become a standard method for data
  analysis used in a broad range of applications.  
  In this paper we propose a new class of algorithms for multiway spectral
  clustering based on optimization of a certain ``contrast function'' over the
  unit sphere.
  These algorithms, partly inspired by certain Independent Component Analysis
  techniques, are simple, easy to implement and efficient.       

  Geometrically, the proposed algorithms can be interpreted as hidden basis
  recovery by means of function optimization.
  We give a complete characterization of the contrast functions admissible for
  provable basis recovery. We show how these conditions can be interpreted as
  a ``hidden convexity'' of our optimization problem on the sphere;
  interestingly,
  we use efficient convex maximization rather than the more common convex
  minimization.
  We also show encouraging experimental results on real and simulated data. 
\end{abstract}

\mykeywords{spectral clustering, convex maximization, basis recovery}

%%% Local Variables: 
%%% mode: latex
%%% TeX-master: "main"
%%% End: 

}
\fi

%%%%%%%%%%%%%%%%%%%%%%%%%%%%%%%%%%%%%%%%%%%%%%%%%%%%%%%%%%%%%%%%%%%%%%%%%%%%
%%%  MAIN DOCUMENT
%%%%%%%%%%%%%%%%%%%%%%%%%%%%%%%%%%%%%%%%%%%%%%%%%%%%%%%%%%%%%%%%%%%%%%%%%%%%

\maketitle
\ifnum\version=1

\fi

\IEEEraisesectionheading{\section{Introduction}\label{sec:introduction}}

\IEEEPARstart{P}{artitioning} a dataset into classes based on a similarity between data points, known as  cluster analysis, 
is one of the most basic and practically important problems in  data analysis and machine learning. It has a vast array of applications from speech recognition to image analysis to bioinformatics and to data compression.
There is an extensive   literature on the subject, including a number of different methodologies as well as their various practical and theoretical aspects~\citep{Jain88}. 

In recent years spectral clustering---a class of methods based on the eigenvectors of a certain matrix, typically the graph Laplacian constructed from data---has become a widely used method for cluster analysis.   
This is due to the simplicity of the algorithm, a number of desirable properties it exhibits and its amenability to theoretical analysis.  
In its simplest form, spectral bi-partitioning is an attractively straightforward algorithm based on  thresholding the second bottom eigenvector of the Laplacian matrix of a graph.
However, the more practically significant problem of multiway spectral clustering\index{spectral clustering!multiway} is considerably more complex. 
While hierarchical methods based on a sequence of binary splits have been used, the most common  approaches use $k$-means or weighted $k$-means clustering in the spectral space or related iterative procedures~\citep{ShiMal00,ng2002spectral,DBLP:journals/jmlr/BachJ06,yu2003multiclass}.
Typical algorithms for multiway spectral clustering follow a two-step process:

\begin{enumerate}
\item
\emph{Spectral embedding:}\index{spectral embedding} A similarity graph for the data is constructed based on the data's feature representation.  If one is looking for $k$ clusters, one constructs the embedding using the bottom $k$ eigenvectors of the graph Laplacian (normalized or unnormalized) corresponding to that graph. 

\item
\emph{Clustering:}\index{spectral clustering!clustering step} In the second step, the embedded data (sometimes rescaled)  is clustered, typically using the conventional/spherical $k$-means algorithms or their variations.
\end{enumerate}

In the first step, the spectral embedding given by the eigenvectors of Laplacian matrices has a number of interpretations.  The meaning can be explained by spectral graph theory as relaxations of multiway cut problems \citep{von2007tutorial}.
In the extreme case of a similarity graph having $k$ connected components, the embedded vectors reside in $\R^k$, and vectors corresponding to the same connected component are mapped to a single point.
There are also  connections to other areas of machine learning and mathematics, in particular to the geometry of the underlying space from which the data is sampled~\citep{BN03}. 

%While the existing approaches to clustering the resulting embedded data (the second step)  have some theoretical justification \citep{DBLP:journals/jmlr/BachJ06,yu2003multiclass}, 
%the existing theoretical analyses typically assume that the minimum of an energy function can be found.
%However, the actual algorithms resemble $k$-means clustering and  the output quality is initialization dependent, as they can become trapped in local minima.

We propose a new class of algorithms for the second step of  multiway spectral clustering.  The starting point is that when the $k$ clusters are perfectly separate, the spectral embedding using the bottom $k$ eigenvectors has a particularly simple geometric form.
For the unnormalized (or  asymmetric normalized) Laplacian, it is simply a (weighted) orthogonal basis in $k$-dimensional space, and recovering the basis vectors is sufficient for cluster identification.
This view of spectral clustering as basis recovery is related to previous observations that the spectral embedding generates a discrete weighted simplex\index{discrete simplex} (see \citep{weber2004perron,ravindran13icml} for some applications).
%This observation was first made in~\citep{weber2004perron} (see also~\citep{ravindran13icml} for some applications), where the authors also proposed an optimization procedure to recover the simplex vertices.
For the symmetric normalized Laplacian, the structure is slightly more complex, but is still suitable for our analysis.
Moreover, our proposed algorithms can be used without modification.

The proposed approach relies on an optimization problem resembling  certain Independent Component Analysis\index{independent component analysis} techniques, such as FastICA\index{FastICA} (see~\citep{hyvarinen2004independent} for a broad overview).
Specifically,  the problem of identifying $k$ clusters reduces to maximizing  a certain ``admissible" contrast function over a $(k-1)$-sphere.
Our main theoretical contribution is to formulate a general version of the basis recovery problem arising in spectral clustering, and to characterize the set of admissible contrast functions for guaranteed recovery%
\myfootnote{Interestingly, there are
  no analogous recovery guarantees in the ICA setting except for the
  special case of cumulant functions as contrasts.
  In particular, typical versions of FastICA\index{FastICA} are known to have
  spurious maxima~\citep{wei2015study}.}
(Section~\ref{sec:cond-defl-basis}).
% Each local maximum of such a function on the sphere corresponds to
% exactly one cluster in the data. The main theoretical contribution of
% our paper is to provide a complete characterization of the admissible
% contrast functions for geometric basis recovery. 
% We show that such contrast functions have a certain ``hidden
% convexity'' property and that this property is necessary and
% sufficient for guaranteed recovery%
% \myfootnote{Interestingly, there are
%   no analogous recovery guarantees in the ICA setting except for the
%   special case of cumulant functions as contrasts.
%   In particular, typical versions of FastICA are known to have
%   spurious maxima~\citep{wei2015study}.}
% (Section~\ref{sec:cond-defl-basis}). 
Rather than the more usual convex minimization, our analysis is based on  \emph{convex maximization}\index{convex maximization} over a (hidden) convex domain. 
Interestingly, while {\it maximizing} a convex function over a convex domain is generally  difficult (even maximizing a positive definite quadratic form over the continuous cube $[0,1]^n$ is NP-hard\myfootnote{This follows from~\citep{gritzmann19890} together with Fact~\ref{ch-opt:fact:maximum_principle} below.}), our setting allows for efficient optimization.

Based on this theoretical connection between clusters and local maxima of contrast functions over the sphere, we
propose practical algorithms for cluster recovery through function maximization. We discuss the choice of contrast functions  and  provide running time analysis. We also provide a number of encouraging experimental results on synthetic and real-world data sets. 

% Our perspective of spectral clustering as basis recovery is closely related to the eigenvector recovery problem for symmetric matrices.
% % In this work, we explore pure optimization techniques for performing this recovery.
% In a related work~\citep{BelkinRV14a} we discuss how the underlying basis can be recovered via a generalization of the matrix power method.
% However, the pure optimization view explored in this work makes admissible a wider range of contrast functions.

We also note connections to recent work on geometric recovery. 
\ifnatbib \citet{luis13} \else Anderson et al.\@ \cite{luis13} \fi use the method of moments to recover a continuous
simplex given samples from the uniform probability distribution. 
Like in our work, \ifnatbib \citeauthor{luis13} \else Anderson et al.\@ \fi use efficient enumeration of
local maxima of a function over the sphere.
\ifthesis
Finally, we note that in a later chapter (see
section~\ref{sec:example-befs}), we discuss other problems of
interest within the machine learning community in which the structure
of our basis recovery problem arises.
In particular, we show connections to Independent Component Analysis,
certain orthogonal tensor decompositions, and Gaussian mixture
learning.
\else
Also, one of the results of \ifnatbib \citet{hsu2013learning} \else Hsu and Kakade \cite{hsu2013learning} \fi shows recovery of
parameters in a Gaussian Mixture Model using the moments of order
three, and this result can be thought of as a case of the basis
recovery problem.
\fi

%\iffalse
The paper is structured as follows:  In Section~\ref{sec:cond-defl-basis}, we provide our main technical results on basis recovery and briefly outline its connection to spectral clustering.
In Sections~\ref{sec:section3} and~\ref{sec:null_space_L} we introduce spectral clustering and formulate it in terms of basis learning.
In Section~\ref{sec:arb_functions} we provide the main theoretical results for basis recovery in the spectral clustering setting, and discuss algorithic implementation details.
% . Section~\ref{sec:algorithms} discusses the algorithms, choices of contrast functions and implementations.
Our experimental results are given in Section~\ref{sec:experiments}.
Finally in Section~\ref{app:Lsym}, we handle the deferred proof details and discuss the admissibility of normalized graph Laplacians for our framework.
% \fi

% Chapter~\ref{ch:introduction} introduced the notion of a basis encoding function (BEF).
% For $F(\vec u) = \sum_{i=1}^{\adim} g_i(\ipCanonical{\vec u}{\hbev_i})$ a BEF, there is a close connection between convexity and the ability to recover the hidden basis elements $\hbev_1, \dotsc, \hbev_\adim$ by using optimization strategies.

%%% Local Variables: 
%%% mode: latex
%%% TeX-master: "main"
%%% End: 

\section{Basis Recovery and Spectral Clustering}
\label{sec:cond-defl-basis}

In this section, we provide our main technical results on hidden basis recovery.
Then, we briefly discuss how our results will apply to the spectral clustering setting.

\emph{A Note on Notation.}
In what follows, we will use the following notations.
For a matrix $B$, $b_{ij}$ indicates the element in its $i$\textsuperscript{th} row and $j$\textsuperscript{th} column.
The $i$\textsuperscript{th} row vector of $B$ is denoted $b_{i\vecdot}$, and the $j$\textsuperscript{th} column vector of $B$ is denoted $b_{\vecdot j}$.  
For a vector $\vec v$, $\norm{\vec v}$ denotes its standard Euclidean 2-norm.
Given two vectors $\vec u$ and $\vec v$, $\ipCanonical {\vec u} {\vec v}$ denotes their dot product.
We denote the set $\{1, 2, \dotsc, k\}$ by $[k]$.
We denote by $\One_{\calS}$ the
indicator vector for the set $\calS$, i.e.\@ the
vector which is $1$ for indices in $\calS$ and $0$ otherwise.  The null space of a matrix $M$ is denoted $\NN(M)$.
We denote the unit sphere in $\R^d$ by $\sphere^{d-1}$.
For points $p_1, \dotsc, p_m$, $\conv(p_1, \dotsc, p_m)$ will denote their convex hull.
% The set of $r$-times continuously differentiable functions on the domain $\mathcal{X}$ and range $\R$ is denoted by $\mathcal{C}^{(r)}(\mathcal{X})$, with $\CC(\mathcal{X})$ denoting the set of functions which are continuous but which need not be differentiable on $\mathcal{X}$.
All angles are given in radians, and $\angle(\vec u, \vec v)$ denotes the angle between the vectors $\vec u$ and $\vec v$ in the domain $[0, \pi]$.
We use $\mapsto$ to define anonymous functions; for instance $t \mapsto t^2$ is the function $f : \R \rightarrow \R$ defined by $f(t) := t^2$.
Finally, for $\mathcal{X}$ a subspace of $\R^{\adim}$, $P_{\mathcal{X}}$ denotes the square matrix  corresponding to the orthogonal projection from $\R^{\adim}$ to $\mathcal X$.

\subsection{Basis Recovery via Convex Maximization}
The main technical results of this section deal with reconstructing a hidden basis by simple optimization techniques.
For this purpose, we introduce the following class of functions.
\begin{defn}\label{ch-opt:def:OBEF}
  A function $F : \R^\adim \rightarrow \R$ is said to be an \emph{orthogonal basis encoding function}\index{Basis Encoding Function!orthogonal} (orthogonal BEF) if there exists an orthonormal basis $\hbev_1, \dotsc, \hbev_\adim$ of $\R^\adim$ and functions $g_i : \R \rightarrow \R$ such that $F(\vec u) = \sum_{i=1}^\adim g_i(\ipCanonical{\vec u}{\hbev_i})$.
\end{defn}

We will assume throughout that the functions $g_i$ (and hence $F$) are
continuously differentiable.
In this section, we provide conditions under which recovery of the hidden
basis $\hbev_1, \dotsc, \hbev_\adim$ (up to sign) can be guaranteed for an
orthogonal BEF using simple function maximization techniques.
To motivate our conditions, it will be useful to first consider a classic
problem which fits into the orthogonal BEF framework: the eigendecomposition of positive definite symmetric matrices.

\begin{example}[Symmetric PSD Matrix Eigendecompositions]\label{ch-opt:ex:Eigendecomp-Degenerate}
  Let $A$ be a symmetric positive semi-definite matrix with
  eigendecomposition $A = \sum_{i=1}^\adim \lambda_i \hbev_i \hbev_i^T$.
  The function $F_A : \R^\adim \rightarrow \R$ defined by $F_A(\vec u) := \vec u^T A \vec u = \sum_{i = 1}^\adim \lambda_i \ipCanonicalp{\vec u}{\hbev_i}^2$ is an orthogonal BEF with the functions $g_i : \R \rightarrow \R$ defined as $g_i(x) := \lambda_i x^2$.
  % Further, we note that the functions $g_i(\sqrt x) = \lambda_i x$ are linear (and hence convex) functions.
  If the eigenvalues are ordered such that $\lambda_1 > \lambda_2 > \dotsc > \lambda_\adim$,
  then the directions $\pm \hbev_1$ are the maxima (local and global) of $F_A$ on the domain $\sphere^{\adim - 1}$.\vnote{Try to find a citation for this preceding fact}
  Further, after $\pm \hbev_1$ is recovered, we may maximize $F_A$ in the orthogonal complement of $\hbev_1$ to recover $\pm \hbev_2$.
  This deflationary procedure can be extended to recover all eigenvectors of $F_A$ (see Algorithm~\ref{alg:deflation} for the idea).
  
  However, when $A$ has repeated eigenvalues, then its eigendecomposition is no longer uniquely defined.
  For the identity matrix $\Id$, any orthonormal basis in $\R^\adim$ can be used to form its eigenvectors, and the function $F_\Id(\vec u) = 1$ for any choice of $\vec u \in \sphere^{\adim - 1}$.
  In general, the hidden basis recovery problem arising in the
  eigendecomposition problem is only uniquely defined when there are
  no repeat eigenvalues.
\end{example}

As pointed out by the Example~\ref{ch-opt:ex:Eigendecomp-Degenerate}, 
we will need to understand the conditions under which a deflationary approach to maximizing a BEF $F$ on $\sphere^{\adim - 1}$ (see Algorithm~\ref{alg:deflation}) can be guaranteed to recover the hidden basis $\hbev_1, \dotsc, \hbev_{\adim}$.
We also wish that the hidden basis $\hbev_1, \dotsc, \hbev_\adim$ be uniquely defined by the BEF $F$.
It turns out that the following assumption is sufficient for performing guaranteed basis recovery.
\begin{assump}[Strict convexity]\label{assump:strict-convexity}\index{Assumption!\ref{assump:strict-convexity}}
  For all $i \in [\adim]$, $t \mapsto g_i(\sign(t) \sqrt{ \abs t })$ is strictly convex\index{strict convexity}.
\end{assump}

More formally, we have the following result.

\begin{myalgorithm}[bt]
  \mycaption[Deflationary scheme for hidden basis
  recovery]{\label{alg:deflation} The deflationary scheme for hidden
    basis recovery.
    This is an abstract algorithm which when given access to an
    orthogonal BEF $F(\vec u) = \sum_{i=1}^\adim g_i(\ipCanonical{\vec
      u}{\hbev_i})$ satisfying
    Assumption~\ref{assump:strict-convexity}, it recovers and returns
    estimates of the hidden basis directions $\hbev_1, \dotsc,
    \hbev_\adim$ up to unknown signs and potentially an unknown
    permutation.}
  \begin{algorithmic}[1]
    \For {$i \leftarrow 1$ to $\adim$}
      \State Find $\tilde \hbev_i$ a local maximizer of $F$ on $\sphere^{\adim - 1} \cap \spn(\{ \tilde \hbev_1, \tilde \hbev_2, \dotsc, \tilde \hbev_{i-1} \})^\perp$
    \EndFor
    \State \Return $\tilde \hbev_1, \dotsc, \tilde \hbev_\adim$.  
  \end{algorithmic}
\end{myalgorithm}

\begin{thm}\label{ch-opt:thm:hbfopt_strict_conv}
  Suppose that $F$ is an orthogonal BEF satisfying Assumption~\ref{assump:strict-convexity}.
  Then, the set of local maxima of $F$ on the unit sphere is non-empty and contained in the set $\{ \pm \hbev_1, \dotsc, \pm \hbev_\adim \}$.
\end{thm}
The Assumption~\ref{assump:strict-convexity} is sufficient for hidden
basis recovery in the sense of the following Corollary.
Its proof is an exercise in induction on the number of recovered
vectors $\tilde \hbev_j$, where the inductive step is a result of
Theorem~\ref{ch-opt:thm:hbfopt_strict_conv}.
\begin{cor}\label{ch-opt:cor:deflation-success}
  If $F$ is an orthogonal BEF satisfying Assumption~\ref{assump:strict-convexity}, then the abstract Algorithm~\ref{alg:deflation} returns vectors $\tilde \hbev_1, \dotsc, \tilde \hbev_\adim$ which recover the directions $\hbev_1, \dotsc, \hbev_\adim$ up to a choice of signs and permutation.
  More precisely, there exists sign $s_i \in \{ \pm 1 \}$ and a permutation $p$ of $[\adim]$ such that $\hbev_i = s_i \tilde \hbev_{p(i)}$ for each $i \in [\adim]$.
\end{cor}
Before proceding with the proof of
Theorem~\ref{ch-opt:thm:hbfopt_strict_conv}, it is worth discussing
the importance of \emph{strict} convexity in
Assumption~\ref{assump:strict-convexity}.
% Further, \emph{strict} convexity is important in
% Assumption~\ref{assump:strict-convexity} as the standard notion of
% convexity does not suffice for guaranteed hidden basis recovery.
In the case of the matrix eigendecomposition
Example~\ref{ch-opt:ex:Eigendecomp-Degenerate} with the identity
matrix $\Id$, we constructed an orthogonal BEF with contrast functions
$g_i(t) = t^2$ which satisfy that each $g_i(\sign(t) \sqrt{\abs t}) =
t$ is convex but not \emph{strictly} convex.
The function $F_\Id(\vec u)$ is constant on the unit sphere, and there
is no uniquely defined hidden basis (or eigenvector basis) for the
identity matrix.
In this sense, it does not suffice for $t \mapsto g(\sign(t)\sqrt{\abs
  t})$ to be convex.

Interestingly, the only issue which can arise when strict convexity is relaxed to convexity in Assumption~\ref{assump:strict-convexity} is that the function $F$ may plateau (become constant) on regions within the unit sphere $\sphere^{\adim - 1}$.
Strict convexity is one way to ensure that this does not happen.
Nevertheless, the problem of recovering the eigendecomposition of a positive definite symmetric matrix $A$ (Example~\ref{ch-opt:ex:Eigendecomp-Degenerate}) is a limit case of our framework.
Moreover, Algorithm~\ref{alg:deflation} can be used to perform eigenvector recovery since one does not require uniqueness of the eigenvector basis.

The intuition behind Assumption~\ref{assump:strict-convexity} is captured in the proof of Theorem~\ref{ch-opt:thm:hbfopt_strict_conv}.
The main idea is to introduce a change of variable and recast maximization of $F$ over the unit sphere as a \emph{convex maximization}\index{convex maximization} problem defined over a (hidden) convex domain.
\begin{proof}[Proof of Theorem~\ref{ch-opt:thm:hbfopt_strict_conv}]
  We will use the following Fact about convex maximization
  (see~\cite[Chapter 32]{MR1451876} for an overview of concepts
  related to convex maximization).

For a convex set $K$, a point $x \in K$ is said to be an \emph{extreme
  point}\index{extreme point} if $x$ is not equal to a strict convex
combination of two other points in $K$.
\begin{fact}\label{ch-opt:fact:maximum_principle} 
  Suppose that $K$ is a closed and bounded convex set.
  Let $f : K \rightarrow \R$ be a \emph{strictly} convex
  function. 
  Then, the set of local maxima of $f$ on $K$ is non-empty and contained in the set of extreme points of $K$.
\end{fact}
% \begin{fact}\label{ch-opt:fact:convex-maxima}
%   Let $K$ be a closed, bounded, convex set.
%   If $f : K \rightarrow \R$ is a convex function, % which obtains its supremeum,
%   then $f$ obtains its supremum at an extreme point $x$ of $K$.  
% \end{fact}

  As $\hbev_1, \dotsc, \hbev_\adim$ form an orthonormal basis of the space, we may simplify notation and work in the coordinate system in which $\hbev_1, \dotsc, \hbev_\adim$ are the canonical vectors $\vec e_1, \dotsc, \vec e_\adim$.
  We define $\Delta^{\adim - 1} := \conv(\vec e_1, \dotsc, \vec e_\adim)$ a (hidden) simplex, and
  $\porth^{\adim - 1} := \{ \vec u \in \sphere^{\adim} \suchthat u_i \geq 0 \text{ for all } i \in [\adim]\}$ the restriction of the sphere onto the positive orthant.
  By the symmetries of the problem, it suffices to show that the set $S$ of local maxima of $F$ with respect to $\porth^{\adim - 1}$ is non-empty and that $S \subset \{ \vec e_1, \dotsc, \vec e_\adim\}$.

  The main idea is to use the change of variable $\psi : \porth^{\adim - 1} \rightarrow \Delta^{\adim - 1}$ defined by $\psi_i(\vec u) = u_i^2$.
  Since
  \begin{equation}\label{ch-opt:eq:hbf-simplex}
    F \circ \psi^{-1}(\vec x) 
    = \sum_{i=1}^\adim g_i(\psi^{-1}_i(\vec x))
    = \sum_{i=1}^\adim g_i(\sqrt{x_i}) \ ,
  \end{equation}
  then by Assumption~\ref{assump:strict-convexity}, $F \circ \psi^{-1} : \Delta^{\adim - 1} \rightarrow \R$ is a strictly convex function defined on a closed and bounded convex domain.
  By Fact~\ref{ch-opt:fact:maximum_principle}, we note that the set $S'$ of local maxima of $F \circ \psi^{-1}$ on $\Delta^{\adim - 1}$ is nonempty and contained in the set $\{\vec e_1, \dotsc, \vec e_\adim\}$ of extreme points of $\Delta^{\adim - 1}$.
  Pulling back to $\porth^{\adim - 1}$, we see that $S = \psi^{-1}(S')$ is a non-empty subset of $\{\vec e_1, \dotsc, \vec e_\adim\}$. 
\end{proof}

\subsection{Spectral Clustering as Basis Recovery}
\label{subsec:spectral-basis-outline}
%\processifversion{vPNAS}{\ }%
It turns out that orthogonal basis recovery has direct implications
for spectral clustering.
In particular, when an $\dn$-vertex similarity graph $G$ has $\Zpts$
connected components corresponding to the desired clusters, 
it will be seen in section~\ref{sec:null_space_L} that the spectral embedding into $\R^{\Zpts}$ maps
each vertex $\vec v_i$ in the $j$\textsuperscript{th} connected
component onto a ray protruding from the origin in a direction
$\hbev_j$.
It happens that the directions $\hbev_1, \dotsc, \hbev_{\Zpts}$ are
orthogonal.
We let $\vec x_{i}$ denote the embedded points and we construct the
function
\begin{equation*}
  \fg(\vec u) := \frac 1 \dn \sum_{i=1}^\dn g(\abs{ \ipCanonical {\vec u}{\vec x_{i}} } ) \ ,
\end{equation*}
from the embedded data and the contrast function $g : \R \rightarrow \R$.

To see that $\fg$ is actually an orthogonal BEF, we consider the
following theoretical construction: Let $\SS_1, \dotsc \SS_\edim$ be
the vertex index sets corresponding to the distinct components of the
graph $G$, and define the functions $g_j : \R \rightarrow \R$ for all
$j \in [\edim]$ by $g_j(t) = \frac 1 \dn \sum_{i \in \SS_j} g(t \norm{\vec x_i})$.
Then, it may be verified that $F_g(\vec u) = \sum_{j = 1}^\edim
g_j(\ipCanonical{\vec u}{\hbev_j})$, which takes on the form of an
orthogonal BEF\index{spectral clustering!orthogonal BEF}.
In particular, we will be able to recover the directions $\hbev_1,
\dotsc, \hbev_\edim$ corresponding to the desired clusters by
maximizing the function $F_g$ on the unit sphere $\sphere^{\edim -
  1}$.

Due to the special form of orthogonal BEF which arises in spectral clustering, we will have slightly stronger guarantees.
In particular, it will be seen (Theorem~\ref{thm:complete_enumeration} and Theorem~\ref{thm:spectralembBEF-complete-enumeration-general}) all of the directions of $\pm \hbev_1, \dotsc, \pm \hbev_\edim$ are strict local maximum of $\fg$ on $\sphere^{\edim - 1}$ instead of just some.

\section{Spectral Clustering Problem Statement}\label{sec:section3}

%this is very important

Let $G = (V, A)$ denote a \emph{similarity graph}\index{similarity graph} where $V$ is a set of $\dn$ vertices
and $A$ is an adjacency matrix with non-negative weights.
Two vertices $i, j \in V$ are incident if $a_{ij} > 0$, and the value of $a_{ij}$
is interpreted as a measure of the similarity between the vertices.  In spectral
clustering, the goal is to partition the vertices of a graph into sets
$\calS_1, \dotsc, \calS_\edim$ such that these vertex sets form natural clusters in the graph.
In the most basic setting,
$G$ consists of $\edim$ connected components, and the natural clusters should be the
components themselves.  In this case, if $i' \in \calS_i$ and $j' \in \calS_j$ then
$a_{i'j'} = 0$ whenever $i \neq j$.  For convenience, we can consider the 
vertices of $V$ to be indexed such that all indices in $\calS_i$ precede all 
indices in $\calS_j$ when $i < j$.  
The matrix $A$ takes on the form:
\begin{equation*}
  A = \left( \begin{array}{cccc}
        A_{\calS_1} & 0 & \cdots & 0 \\
        0 & A_{\calS_2} & \cdots & 0 \\
        \vdots & \vdots & \ddots & \vdots \\
        0 & 0 & \cdots & A_{\calS_\edim}
      \end{array} \right) \  ,
\end{equation*}
a block diagonal matrix.  In this setting, spectral clustering can be viewed as a
technique for reorganizing a given similarity matrix $A$ into such a
block diagonal matrix. 

% We will refer to this setting in which
%$G$ consists of $\Zpts$ connected components
%as the idealized spectral clustering problem.

In practice, $G$ rarely consists of $\Zpts$ truly disjoint connected
components.
Instead, one typically observes a matrix $\tilde A = A + E$ where $E$
is a perturbation from the clean setting.
% For $i$ and $j$ in different clusters, all that can be said is that
% $\tilde a_{ij}$ should be small.
The goal of spectral clustering is to permute the rows and columns of
$\tilde A$ to form a matrix which is nearly block diagonal and to
recover the corresponding clusters.  

%Whereas a 
%simple depth first search would suffice in the idealized setting, we in general require an algorithm which is meaningful under a
%perturbation of $A$.  
% We present a novel algorithm for spectral clustering and demonstrate algorithmic correctness in the idealized setting.
% The algorithm is tested on real data, demonstrating that it has practical value.

%\section{Preliminaries}

%\lnote{Definitions: relative local maximum, extreme point, strictly convex.}

%%%%%%%%%%%%%%%%%%%%%%%%%%%%%%%%%%%%%%%%%%%%%%%%%%%%%%%%%%%%%%%%%%%%%%%%%%%%%%
%%% GRAPH LAPLACIAN NULL SPACE
%%%%%%%%%%%%%%%%%%%%%%%%%%%%%%%%%%%%%%%%%%%%%%%%%%%%%%%%%%%%%%%%%%%%%%%%%%%%%%
\section{The Spectral Embedding}\index{spectral embedding}\label{sec:null_space_L}

Given an $\dn$-vertex similarity graph $G = (V, A)$, let $D$ be the
diagonal degree matrix with non-zero entries $d_{ii} = \sum_{j \in V}
a_{ij}$.  
The graph Laplacian\index{graph Laplacian}
is defined as $L := D - A$.
The following well known property of the graph Laplacian
(see \citep{von2007tutorial} for a review) helps shed light on its
importance: Given $\vec u \in \R^\dn$,
%\lnote{what is $\dn$?}
\begin{equation}\label{eq:Laplacian}
  \vec u^T L \vec u = \frac 1 2 \sum_{i, j  \in V} a_{ij}(u_i - u_j)^2 \ .
\end{equation}
The graph Laplacian $L$ is symmetric positive semi-definite as equation~\eqref{eq:Laplacian} cannot be negative. 
Further, $\vec u$ is a 0-eigenvector of $L$ (or equivalently, $\vec u
\in \NN(L)$\index{graph Laplacian!null space})
if and only if $\vec u^TL \vec u = 0$.
When $G$ consists of $\Zpts$ connected components with indices in the sets
$\calS_1, \dotsc, \calS_{\Zpts}$, inspection of equation~\eqref{eq:Laplacian} gives that
$\vec u \in \NN(L)$ precisely when $\vec u$ is piecewise constant on each $\calS_i$.
In particular, 
\begin{equation}\label{ch-opt:eq:Laplacian-null-space}
\{\abs{\calS_1}^{-\sfrac 1 2} \One_{\calS_1}, \dotsc, \abs{\calS_{\Zpts}}^{-\sfrac 1 2} \One_{\calS_{\Zpts}} \}
\end{equation}
is an orthonormal basis for $\NN(L)$.

In general, letting $X \in \R^{\dn \times \Zpts}$ contain an orthogonal basis of $\NN(L)$, it cannot be guaranteed that the rows of $X$ will act as indicators of the various classes, as the columns of $X$ have only been characterized up to a rotation within the subspace $\NN(L)$.
% Nevertheless, extending related, known results (see e.g.~\citep{weber2004perron}, \citep[Proposition 5]{Verma03acomparison}, and \citep[Proposition 1]{ng2002spectral}),
However, the rows of $X$ are contained in a scaled orthogonal basis of $\R^\Zpts$ with the basis directions corresponding to the various classes.
We formulate this result as follows (see~\citep{weber2004perron}, \citep[Proposition 5]{Verma03acomparison}, and \citep[Proposition 1]{ng2002spectral} for related statements).
% We use the following formulation of this result (see~\citep{weber2004perron}, \citep[Proposition 5]{Verma03acomparison}, and \citep[Proposition 1]{ng2002spectral} for related statements).

% There are many possible choices for an orthonormal basis of $\NN(L)$.  
% We cannot assume that any particular basis of $\NN(L)$ will directly provide
% indicators of the various classes.
% %However, for any orthonormal basis $v_1, \dotsc, v_{\Zpts}$ of $\NN(L)$, the projection matrix $P_{\NN(L)} = (v_1 | \cdots | v_\Zpts)(v_1 | \cdots | v_\Zpts)^T$ %$P_{\NN(L)} = \sum_{i=1}^{\Zpts} v_iv_i^T$ 
% %encodes information invariant to the choice of basis.\lnote{shouldn't this more precisely say ``is invariant to the choice of basis''?}
% Nevertheless, extending related, known results (see e.g.~\citep{weber2004perron}, \citep[Proposition 5]{Verma03acomparison}, and \citep[Proposition 1]{ng2002spectral}), we note that
% any basis of $\NN(L)$ generates an orthogonal simplex structure, spatially separating the connected components of $G$:
% \lnote{is this lemma new and never known? don't we need to give credit?}

\begin{prop} \label{prop:discrete-simplex}
  Let the similarity graph $G=(V, A)$ contain $\Zpts$ connected components with indices in the sets $\calS_1, \dotsc, \calS_{\Zpts}$, 
  let $\dn = \abs{V}$, and let 
  $L$ be the graph Laplacian of $G$.
  Then, $\NN(L)$ has dimensionality $\Zpts$.
  Let $X = (x_{\vecdot 1},  \dotsc, x_{\vecdot \Zpts})$ contain $\Zpts$ scaled, 
  orthogonal column vectors forming a basis of $\NN(L)$ such that $\norm{x_{\vecdot j}} = \sqrt \dn$ for each $j \in [\Zpts]$.
  Then, there exist weights $w_1, \dotsc, w_{\Zpts}$ with 
  $w_j = \frac{\abs{\calS_j}}{\dn}$ 
  and
  mutually orthogonal vectors $\myZv_1, \dotsc, \myZv_{\Zpts} \in \R^{\Zpts}$ such that
  whenever $i \in \calS_j$, the row vector $x_{i \vecdot} = \frac 1 {\sqrt{w_j}} \myZv_j^T$.
%\lnote{$i$ vs $j$?\vnote{I believe it is fixed, now.}}
\end{prop}

\begin{proof}
We define the matrix $M_{\calS_i} := \One_{\calS_i} \One_{\calS_i}^T$.
%\begin{align*}
%  M_{\calS_i} &:= \One_{\calS_i} \One_{\calS_i}^T
%\end{align*}
$P_{\NN(L)}$ can be constructed from any orthonormal basis of $\NN(L)$.
Using the two bases $\{\abs{\calS_1}^{-\sfrac 1 2} \One_{\calS_1}, \dotsc, \abs{\calS_{\Zpts}}^{-\sfrac 1 2} \One_{\calS_{\Zpts}}\}$ and $\{\frac 1 {\sqrt \dn}x_{\vecdot 1}, \dotsc, \allowbreak \frac 1 {\sqrt \dn}x_{\vecdot \Zpts}\}$ yields:
\begin{equation*} % \label{eq:P_ker_L}
  P_{\NN(L)} = \sum_{i = 1}^\Zpts \abs{\calS_i}^{-1} M_{\calS_i} 
  \quad \text{and} \quad  P_{\NN(L)} = \frac 1 \dn XX^T \ .
\end{equation*}
Thus for $i, j \in V$, $\frac 1 \dn \ipCanonical{x_{i \vecdot}}{x_{j \vecdot}} = (P_{\NN(L)})_{ij}$.  
In particular, if there exists $\ell \in [\Zpts]$ such that $i, j \in \calS_{\ell}$, then $\frac 1 \dn \ipCanonical{x_{i \vecdot}}{x_{j \vecdot}} = \abs{\calS_{\ell}}^{-1}$.
When $i$ and $j$ belong to separate clusters, then $x_{i \vecdot} \perp x_{j \vecdot}$.

If $i, j \in \SS_j$, then
\[
\cos(\angle(x_{i \vecdot}, x_{j \vecdot})) = \frac{\ipCanonical{x_{i \vecdot}}{x_{j\vecdot}}}{\norm{x_{i\vecdot}}\norm{x_{j\vecdot}}} = \frac{\abs{S_\ell}^{-1}}{\abs{S_\ell}^{-\sfrac 1 2}\abs{S_\ell}^{-\sfrac 1 2}}  = 1 \ , 
\]
implies that $x_{i\vecdot}$ and $x_{j\vecdot}$ are in the same direction.  
As they also have the same magnitude, $x_{i \vecdot}$ and $x_{j \vecdot}$
coincide for any two indices $i$ and $j$ belonging to the same component of $G$.

Thus letting $w_i := \frac {\abs{\calS_i}} \dn$ for $i = 1, \dotsc, \Zpts$, there are $\Zpts$ perpendicular 
vectors $\myZv_1, \dotsc, \myZv_{\Zpts}$ corresponding to the $\Zpts$ connected components of $G$ such that $x_{i \vecdot} = \frac 1 {\sqrt{w_{\ell}}}\myZv_{\ell}^T$ for all
$i \in \calS_{\ell}$.
\end{proof}

Proposition~\ref{prop:discrete-simplex} demonstrates that using the null space of the graph Laplacian\index{graph Laplacian!null space},
the $\Zpts$ connected components in $G$ are mapped to $\Zpts$ scaled, orthogonal basis vectors in $\R^{\Zpts}$.
Of course, under a perturbation of $A$, the interpretation of
Proposition~\ref{prop:discrete-simplex} must change.  In particular, $G$ will no longer consist of $\Zpts$ 
connected components, and instead of using only vectors in $\NN(L)$,
$X$ must be constructed using the eigenvectors corresponding to the lowest $\Zpts$ eigenvalues of $L$. 
With the perturbation of $A$ comes a corresponding 
perturbation of the eigenvectors in $X$.
Using the perturbation theory of symmetric matrices, it can be shown that when the perturbation is not too large, the structure of $X$ is approximately maintained (see \citep{davis1970rotation,von2007tutorial}).
% When the perturbation is not too large, the resulting rows of $X$ yield $\Zpts$ nearly orthogonal clouds of points.
\ifthesis
The perturbation analysis of spectral clustering is discussed later in section~\ref{sec:spectral-pert-analysis}.
\fi

Due to different properties of the resulting spectral embeddings,
normalized graph Laplacians are often used in place of $L$ for
spectral clustering, in particular the symmetric normalized
Laplacian\index{graph Laplacian!symmetric normalized} $L_{\sym} :=
D^{-\sfrac 1 2} L D^{-\sfrac 1 2}$ and the asymmetric normalized
Laplacian\index{graph Laplacian!asymmetric normalized} $L_{\rw} :=
D^{-1} L$.
These normalized Laplacians are often viewed as more stable to
perturbations of the graph structure.
Further, spectral clustering with $L_{\sym}$ has a nice interpretation
as a relaxation of the NP-hard multi-way normalized graph cut problem
\citep{yu2003multiclass}, and the use of $L_{\rw}$ has connections to
the theory of Markov chains \citep{deuflhard2000identification,MeiShi01}.

For simplicity, we focus first on the unnormalized graph Laplacian $L$.
However, when $G$ consists of $\Zpts$ connected components,
$\NN(L_{\rw})$ happens to be identical to $\NN(L)$.
% , making
% Proposition~\ref{prop:discrete-simplex} and all subsequent results in
% this paper equally valid for $L_{\rw}$.
The algorithms which we will
propose for spectral clustering turn out to be equally valid when
using any of $L$, $L_{\sym}$, or $L_{\rw}$, though the structure of
$\NN(L_{\sym})$ gives rise to a slightly more complicated ray-based basis
structure.
The discussion of $\NN(L_{\sym})$ and its admissibility are deferred to Section~\ref{app:Lsym}.

\section{Basis Recovery for Spectral Clustering}
\label{sec:arb_functions}
We now focus on the second step of spectral clustering, which is clustering the points embedded by the Laplacian embedding into the desired clusters.
In particular, we will now demonstrate that the embedded data (the rows of $X$ in Proposition~\ref{prop:discrete-simplex}) may be used to construct a function optimization problem whereby the maxima structure of the function can be used to recover the desired clusters.
%  over directional projections of the embedded data which can be used to recover the desired clustering in the embedded space.
% we may construct an orthogonal BEF with a hidden basis corresponding to the directions $\myZv_1, \dotsc, \myZv_\edim$ of the desired clusters for spectral clustering.
% As such, we will see that by using simple function optimization techniques, we will be able to perform clustering of the embedded data.
% We construct such a function from the embedded data as follows.
\begin{construction}\label{constr:spectral-Fg-L}
  Given a graph $G$ with $\dn$ vertices and $\Zpts$ connected
  components, let $X$; $\calS_1, \dotsc, \calS_{\Zpts}$; $w_1, \dotsc,
  w_{\Zpts}$; $\myZv_1, \dotsc, \myZv_{\Zpts}$; and
  $L$ as in Proposition~\ref{prop:discrete-simplex}.
  We construct a function $\fg:\sphere^{\Zpts-1}\rightarrow \R$ on the
  unit sphere using a contrast function $g:[0,
  \infty)\rightarrow\R$ where it is assumed that $t \mapsto g(\sqrt t)$ is strictly convex. % (c.f. Assumption~\ref{assump:strict-convexity}).
  We construct $\fg$ as
  \begin{equation} \label{eq:f_defn}
    \fg(\vec u) := \frac 1 \dn \sum_{i=1}^\dn g(\abs{\ipCanonical{\vec u}{x_{i\vecdot}}}) \ .
  \end{equation}
  Using Proposition~\ref{prop:discrete-simplex}, this may be equivalently written as
  \begin{equation} \label{eq:f_weighted_form}
    \fg(\vec u) = \sum_{i = 1}^\Zpts w_i g(\tfrac 1 {\sqrt{w_i}} \abs{\ipCanonical {\vec u} {\myZv_i}}) \ .  
  \end{equation}
\end{construction}
% In this idealized setting where $G$ consists of $\Zpts$ distinct
% components, this could be done by simply enumerating the different
% points which occur.
% However, spectral clustering is applied in a less ideal setting in
% which $G$ consists of $\Zpts$ components with either sparse or
% low-weight cross-connections, and instead of using
% $0$-eigenvectors, the $\Zpts$ lowest eigenvectors have to be used.
% So, the framework used to find the ``points'' $\myZv_j$ must allow
% for a perturbation (hopefully small) of the observed points $x_{i
% \vecdot}$ from the simplex points $\myZv_j$.
% We hope to motivate a new algorithm for spectral clustering, and
% will provide a proof of correctness only for the idealized case in
% which there is no such perturbation.

% We consider an approach related to FastICA which optimizes
% ``arbitrary'' symmetric functions over directional projections of
% the data.

In Construction~\ref{constr:spectral-Fg-L}, the vectors $\myZv_1,
\dotsc, \myZv_{\Zpts}$ form an unseen orthonormal basis of $\R^{\Zpts}$, and
each weight $w_i = \frac{\abs{\calS_i}} \dn$ is the fraction of the rows
of $X$ indexed as $x_{\ell\vecdot}$ which are embedded from the
$i$\textsuperscript{th} component of $G$ and which coincide with the
point $\frac 1 {\sqrt {w_i}} \myZv_i^T$.
Since each embedded point in the $i$\textsuperscript{th} cluster lies on the
line through $\myZv_i$ and $-\myZv_i$, it suffices to recover the
basis directions $\myZv_1, \dotsc, \myZv_\edim$ up to sign in order to
cluster the points.
Our idea is to show that $\fg$ is an orthogonal BEF\index{spectral clustering!orthogonal BEF} which satisfies Assumption~\ref{assump:strict-convexity} with the directions $\myZv_1, \dotsc, \myZv_\edim$ corresponding to the BEF basis.
As such, we will be able to use the maxima structure of $\fg$ on $\sphere^{\edim - 1}$ in order to recover the hidden basis and thence the desired clustering.

% That is, each embedded point $x_{j\vecdot}$ with $j \in \calS_i$ lies
% on the line through $\pm \myZv_i$ and the origin, making these lines
% correspond to the clusters.

We use equation~\eqref{eq:f_weighted_form} to see that $\fg$ is a special form of orthogonal BEF with the functions $g_i$ (see Definition~\ref{ch-opt:def:OBEF}) defined by $g_i(t) := w_i g(\frac 1 {\sqrt {w_i}} \abs{t})$.
Further, since $t \mapsto g(\sqrt t)$ is strictly convex, we see that $t \mapsto g_i(\sign(t)\sqrt{\abs t})$ is strictly convex for all $i \in [\edim]$, and hence $\fg$ satisfies Assumption~\ref{assump:strict-convexity}.
However, due to the special form of $\fg$, each of the directions $\{\pm \myZv_i : i \in [\Zpts]\}$ are maxima of $\fg$ over $\sphere^{\edim - 1}$ (as opposed to just some, cf.\@ Theorem~\ref{ch-opt:thm:hbfopt_strict_conv}).
  % We use an approach based on function optimization over projections
  % of the embedded data.

%where $g$ need only meet the following assumptions:
% It will suffice that $g$ meets the following assumptions:
% \begin{enumerate}[label=A\arabic{enumi}., ref=A\arabic{enumi}, leftmargin=20pt]
% \item\label{Assump:First}
%  $g$ is continuous  on $\R$.
% % \item $g$ is symmetric on $\R$. \label{Assumpt:symmetry}
% \item  \label{Assump:LastAll} \label{Assumpt:convexity}
%   %For all $x \in (0, \infty)$ and $t \in (0, 1)$, then $g'(tx) < tg'(x)$.
%  Function $g(\sqrt{t})$ is strictly convex on $[0, \infty)$. 
% \end{enumerate}

% The function $\fg$ constructed in equation \eqref{eq:f_defn} from any function $g$ which satisfies \ref{Assump:First}--\ref{Assump:LastAll} can be used to recover the spectral simplex vertices up to sign:

\begin{thm} \label{thm:complete_enumeration}
  % Let $g:[0, \infty) \rightarrow \R$ be a continuous function
  % satisfying that $x \mapsto g(\sqrt x)$ is strictly convex.
  Let $\fg:\sphere^{\Zpts-1}\rightarrow \R$ and
  $\myZv_1,\dotsc,\myZv_\edim$ be defined as in
  Construction~\ref{constr:spectral-Fg-L}.
  Then, the set $\{\pm \myZv_i : i \in [\Zpts]\}$ is a complete
  enumeration of the local maxima of $\fg$.
\end{thm}

We defer the proof of Theorem~\ref{thm:complete_enumeration} to section~\ref{sec:spectral-contrast-admissibility}.
We also provide and prove the analogous result for when $F_g$ is constructed using the Laplacian embedding arising from $L_{\rw}$ or $L_{\sym}$ in section~\ref{sec:spectral-contrast-admissibility}.

As $\fg$ % (which is equivalently expressed in equations~\eqref{eq:f_defn} and~\eqref{eq:f_weighted_form}) 
is an orthogonal BEF, it follows from the discussion in section~\ref{sec:cond-defl-basis} that by enumerating the local maxima of $\fg$ using a deflationary scheme, we may recover the hidden basis $\myZv_1, \dotsc, \myZv_\edim$ corresponding to the graph clusters.
By Theorem~\ref{thm:complete_enumeration}, we get slightly more flexibility in our algorithmic design since it is known that each of the directions $\myZv_1, \dotsc, \myZv_\edim$ is a local maximum of $\fg$ on $\sphere^{\edim - 1}$, and therefore we have room to relax the orthogonality constraint from the prototypical deflationary scheme (Algorithm~\ref{alg:deflation}) when designing algorithms for hidden basis recovery in the spectral clustering setting.

\mysubsection{Proposed Algorithms}\label{sec:algorithms}

We now design a new class of algorithms for spectral clustering.  
Given a similarity graph $G = (V, A)$ containing $\dn$ vertices,
define a graph Laplacian $\tilde L$ among $L$, $L_{\rw}$, and $L_{\sym}$ (reader's choice).
Viewing $G$ as a perturbation of a graph consisting of $\Zpts$ connected
components, construct $X \in \R^{\dn \times \Zpts}$ such that $x_{\vecdot i}$ gives the 
eigenvector corresponding to the $i$\textsuperscript{th} smallest eigenvalue
of $\tilde L$ with scaling $\norm{x_{\vecdot i}} = \sqrt \dn$. % $X$ can be constructed using an existing eigensolver.

With $X$ in hand, choose a contrast function $g$ satisfying the strict
convexity condition from Assumption~\ref{assump:strict-convexity}.
From $g$, the function $\fg(\vec u) = \frac 1 \dn \sum_{i = 1}^\dn
g(\ipCanonical {\vec u} {x_{i \vecdot}})$ is defined on $\sphere^{\Zpts-1}$
using the rows of $X$.
The local maxima of $\fg$ correspond to the desired clusters of the
graph vertices.
Since $\fg$ is a symmetric function, if $\fg$ has a local maximum at
$\vec u$, $\fg$ also has a local maximum at $-\vec u$.
However, the directions $\vec u$ and $-\vec u$ correspond to the same line
through the origin of $\R^{\Zpts}$ and form an equivalence class, with
each such equivalence class corresponding to a cluster.

Our first goal is to find local maxima of $\fg$ corresponding to distinct equivalence classes.
We will use that the desired maxima of $\fg$ should be approximately orthogonal to each other.
Once we have obtained local maxima $\vec u_1, \dotsc, \vec u_\edim$ of $\fg$, we cluster the vertices of $G$ by placing
vertex $i$ in the $j$\textsuperscript{th} cluster using the rule
$j = \argmax_\ell \abs{\ipCanonical{\vec u_\ell}{x_{i \vecdot}}}$.
% When searching for the local maxima of $\fg$, we will also use that the local
% maxima of $\fg$ belonging to different equivalence classes should be approximately orthogonal.
We sketch two algorithmic ideas in \findopt\@ and \findenum\@ (where HBR stands for hidden basis recovery).

\begin{myalgorithm}
  \mycaption[\findopt]{
    Finds the local maxima of $\fg$ defined from the embedded vertices
    $x_{i\vecdot}$ which we want to cluster.
    The second input $\eta$ is the learning rate (step size).}%
  \begin{algorithmic}[1]%
  \Function{\findopt}{$X$, $\eta$}
  \State $C \leftarrow \{ \}$
  \For {$i \leftarrow 1$ to $\Zpts$}
  \State Draw $\vec u$ uniformly from $\sphere^{\Zpts-1} \cap \spn(C)^\perp$
      \label{alg1:step:draw_from_sphere}
    \Repeat
       \State\label{alg1:step:ascent} %\newcounter{step_ascent_line}\setcounter{step_ascent_line}{\value{ALG@line}}
$\vec u \leftarrow \vec u + \eta (\grad \fg(\vec u) - \vec u\vec u^T \grad \fg(\vec u) )\newline\phantom.\qquad\qquad\qquad$($=\vec u + \eta P_{\vec u^\perp} \grad \fg(\vec u)$)%
       \State $\vec u \leftarrow P_{\spn(C)^\perp} \vec u$ \hspace{0.75 in} % (optional)
       \label{alg1:step:optional} % \newcounter{step_optional}\setcounter{step_optional}{\value{ALG@line}}
       \State $\vec u \leftarrow \frac {\vec u} {\norm{\vec u}}$
    \Until {Convergence}
    \State Let $C \leftarrow C \cup \{ \vec u \}$
  \EndFor
  \State \Return $C$
  \EndFunction
 \end{algorithmic}%
\end{myalgorithm}

\findopt\@ is a form of projected gradient ascent which more fully
implements the deflationary scheme of Algorithm~\ref{alg:deflation}.
The parameter $\eta$ is the learning rate.
Each iteration of the repeat-until loop moves $\vec u$ in the
direction of steepest ascent.
For gradient ascent in $\R^{\Zpts}$, one would expect
step~\ref{alg1:step:ascent} % \arabic{step_ascent_line}
of \findopt\@ to read $\vec u \leftarrow \vec u + \eta \grad \fg(\vec
u)$.
However, gradient ascent is being performed for a function $\fg$
defined on the unit sphere, but the gradient described by $\grad \fg$
is for the function $\fg$ with domain $\R^{\Zpts}$. The more expanded
formula $\grad \fg(\vec u) - \vec u\vec u^T \grad \fg(\vec u)$ is the
projection of $\grad \fg$ onto the plane tangent to
$\sphere^{\Zpts-1}$ at $\vec u$.
This update keeps $\vec u$ near the sphere.

We may draw $\vec u$ uniformly at random from $\sphere^{\Zpts-1}\cap {\spn(C)^\perp}$ by first
drawing $\vec u$ from $\sphere^{\Zpts-1}$ uniformly at random, projecting $\vec u$ onto ${\spn(C)^\perp}$, and then
normalizing $\vec u$.  It is important that $\vec u$ stay near the orthogonal
complement of $\spn(C)$ in order to converge to a new cluster rather than
converging to a previously found optimum of $\fg$.
%recapturing an optimum of $f$ which has been previously found.  
Step~\ref{alg1:step:optional} %\arabic{step_optional}
enforces this constraint during the update step.

\begin{myalgorithm}
\mycaption[\findenum]{
    Finds the local maxima of $\fg$ defined from the points
    $x_{i\vecdot}$ needed for clustering.
    The second input $\delta$ controls how far a point needs to be
    from previously found cluster centers to be a candidate future
    cluster center.} 
 \begin{algorithmic}[1]
  \Function{\findenum}{$X$, $\delta$}
  \State $C \leftarrow \{ \}$
  \While {$\abs{C} < \Zpts$}
  \State
    $j \leftarrow \argmax_i \{ \fg( \frac {x_{i \vecdot}}{\norm{x_{i \vecdot}}} )
      \suchthat \newline \phantom.\qquad\qquad\qquad
      \angle(\frac {x_{i \vecdot}}{\norm{x_{i \vecdot}}}, \vec u) > \delta \ 
      \forall \vec u \in C \}$
    \State $C \leftarrow C \cup \{\frac{x_{j \vecdot}}{\norm{x_{j \vecdot}}} \}$
  \EndWhile
  \State \Return $C$
  \EndFunction
 \end{algorithmic}\nopagebreak%
\end{myalgorithm}

In contrast to \findopt\@, \findenum\@ more directly uses
the point separation implied by the orthogonality of the approximate
cluster centers.  
%In particular,
Since each embedded data point should be near to a cluster center, the 
data points themselves are used as test points.
Instead of directly enforcing orthogonality between cluster means, a parameter $\delta > 0$ specifies the minimum allowable angle between found cluster means.

By pre-computing
the values of $\fg( x_{i \vecdot} / \norm{x_{i \vecdot}} )$ outside of the while
loop, \findenum\@ can be run in $O(\Zpts \dn^2)$ time.
%\lnote{$mn^2$ for evaluation of f + mn for while loop. could use preconditioning/random projection to speed up computation of pairwise inner products?} For large similarity graphs, 
\findenum\@ is likely to be slower than \findopt\@ which takes $O(\Zpts^2 \dn t)$ 
time where $t$ is the
average number of iterations to convergence.
The number of clusters $\Zpts$ cannot exceed (and is usually much smaller than) the number of graph vertices $\dn$.
%\lnote{it may be worth discussing here at some point that using data points as centers is a factor 2 approximation to a k-means like objective, from a 2006 paper of mine, to make this algorithm better justified.}

\findenum\@ has a couple of nice features which may make it preferable on smaller data sets.
Each center found by \findenum\@ will always be within a cluster of data points
even when the optimization landscape is distorted under perturbation.
In addition, the maxima found by \findenum\@ are based on a more global outlook, which may be useful in the noisy setting.
% Finally, \findenum\@ is a fully deterministic algorithm.  However, the parameter $\delta > 0$ needs to be chosen sufficiently large to encompass most of the perturbation with respect to any cluster.

\mysubsection{Choosing a Contrast Function}
There are many possible choices of contrast $g$ which are admissible for spectral clustering under Theorem~\ref{thm:complete_enumeration} including the following:
\begin{align*}
  g_{\tsig}(t) &= - \frac 1 {1 + \exp(-\abs{t})} & g_p(t) &= \abs{t}^p  \text{ where $p \in (2, \infty)$ }
\end{align*}
\begin{align*}
  g_{\tgau} &= e^{-t^2} &   g_{\tabs}(t) &= - \abs{t} & g_{\tht}(t) &= \log \cosh(t)
\end{align*}
In choosing contrasts, it is instructive to first consider the function $g_2(y) = y^2$ (which relaxes the criterion that $t \mapsto g(\sqrt{\abs t})$ be \emph{strictly} convex to plain convexity and is thus not admissible).  
Noting that $\fg[2](\vec u) = \sum_{i=1}^\edim w_i (\frac 1 {\sqrt{w_i}} \ipCanonical {\vec u} {\myZv_i} )^2 = 1$, we see that $\fg[2]$ is constant on the unit sphere.
We see that the distinguishing power of a contrast function for spectral 
clustering comes from our assumption that $t \mapsto g(\sqrt{\abs t})$ is strictly convex.
Intuitively, ``more strictly convex" contrasts have better resolving power but are also more sensitive to outliers and perturbations of the data.  Indeed, if $g$ grows too rapidly, a small number of outliers far from the origin could significantly distort the maxima structure of $\fg$.

Due to this tradeoff,  $g_{\tsig}$ and $g_{\tabs}$ could be important practical choices for the contrast function.
Both $g_{\tsig}(\sqrt{\abs t})$ and $g_{\tabs}(\sqrt{\abs t})$ have a strong convexity structure near the origin.
As $g_{\tsig}$ is a bounded function, it should be very robust to perturbations.
In comparison, $g_{\tabs}(\sqrt{\abs t}) = - \sqrt{\abs t}$ maintains a stronger convexity structure over a much larger region of its domain, and $g_{\tabs}(t)$ has only a linear rate of growth as $t \rightarrow \infty$.
This is a much slower growth rate than is present for instances in $g_p$ with $p > 2$.

%%%%%%%%%%%%%%%%%%%%%%%%%%%%%%%%%%%%%%%%%%%%%%%%%%%%%%%%%%%%%%%%%%%%%%%%%%%%%%
%%% EXPERIMENTS SECTION
%%%%%%%%%%%%%%%%%%%%%%%%%%%%%%%%%%%%%%%%%%%%%%%%%%%%%%%%%%%%%%%%%%%%%%%%%%%%%%
\section{Clustering Experiments}\label{sec:experiments}
%\begin{figure}

We now discuss our test results on our proposed spectral clustering
algorithms on a variety of real and simulated data.
The implementations for our spectral clustering algorithms are available on github: \url{https://github.com/vossj/HBR-Spectral-Clustering}.

\mysubsection{An Illustrating Example}

\figurename\@~\ref{fig:toy-ex-clusters} illustrates our function
optimization framework for spectral clustering.
In this example, random points $p_i$ were generated from 3 concentric
circles: 200 points were drawn uniformly at random from a radius 1
circle, 350 points from a radius 3 circle, and 700 points from a
radius 5 circle.
The points were then radially perturbed.
The generated points are displayed in \figurename\@~\ref{fig:toy-ex-clusters}~(a).
The similarity matrix $A$ was constructed as $a_{ij} =
\exp(-\frac 1 4 \norm{p_i - p_j}^2$), and the Laplacian embedding was performed using $L_{\rw}$.

\figurename\@~\ref{fig:toy-ex-clusters}~(b) depicts the clustering process with the contrast $g_{\tsig}$ on the resulting embedded points.
In this depiction, the embedded data sufficiently encodes the desired orthogonal basis structure that all local maxima of $F_{g_{\tsig}}$ correspond to desired clusters.
The value of $F_{g_{\tsig}}$ is displayed by the grayscale heat map on the unit sphere in \figurename\@~\ref{fig:toy-ex-clusters}~(b), with lighter shades of gray indicate greater values of $\fg[\tsig]$.
The cluster labels were produced using \findopt\@.
The rays protruding from the sphere correspond to the basis directions recovered by \findopt\@, and the recovered labels are indicated by the color and symbol used to display each data point.

\newlength{\figwidth}
\ifnum\version=1
\setlength{\figwidth}{0.7\linewidth}
\fi
\ifnum\version=2
\setlength{\figwidth}{\linewidth}
\fi
\begin{figure}[t]
  \centering
  \begin{minipage}{\figwidth}
    \begin{minipage}[t]{0.46\linewidth}
      \centering
      \includegraphics[width=\linewidth]{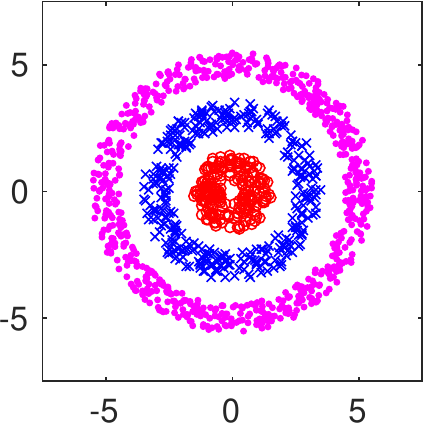}
      (a)%\label{subfig:label_circles}
    \end{minipage}\hfill%
    \begin{minipage}[t]{0.52\linewidth}
      \centering
      \includegraphics[width=\linewidth]{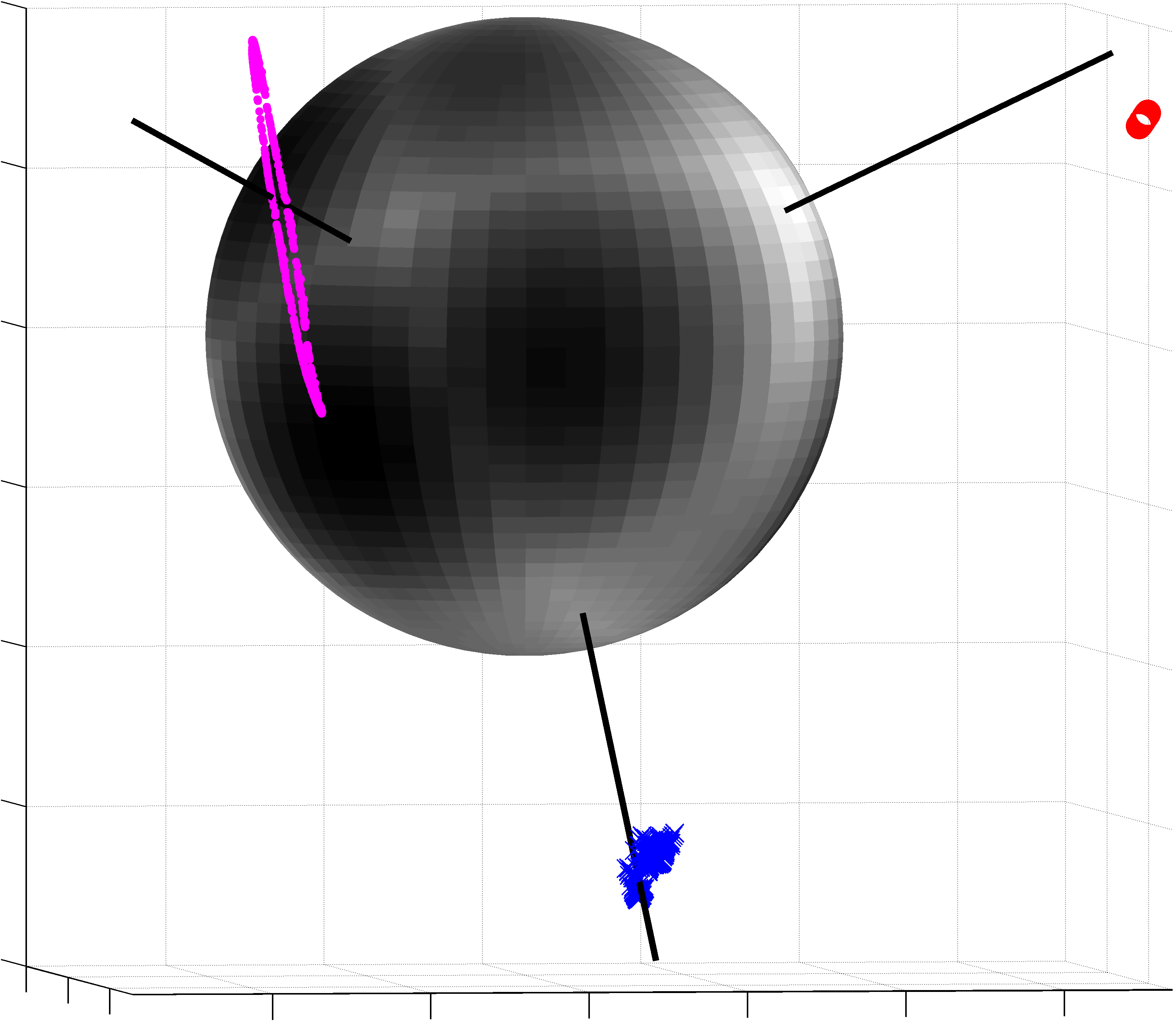}
      (b)%\label{subfig:null_space}
    \end{minipage}
  \end{minipage}
  \mycaption[Illustration of spectral clustering on concentric circle
  data]{\label{fig:toy-ex-clusters} An illustration of spectral
    clustering on the concentric circle data.
    (a) The output of clustering. (b) The embedded data, the
    directional evaluations of $F_{g_{\mathrm{sig}}}$, and the recovered basis
    for clustering.}
\end{figure}
%\end{minipage}\hfill%
%\end{figure}

\mysubsection{Image Segmentation Examples}\index{image segmentation}
\begin{figure}[tb]
  \centering%
  \begin{minipage}{\figwidth}
    \includegraphics[width=0.49\linewidth]{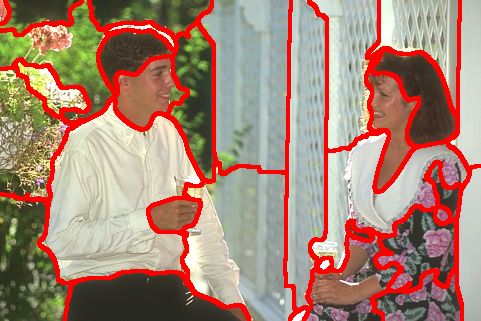}
    \includegraphics[width=0.49\linewidth]{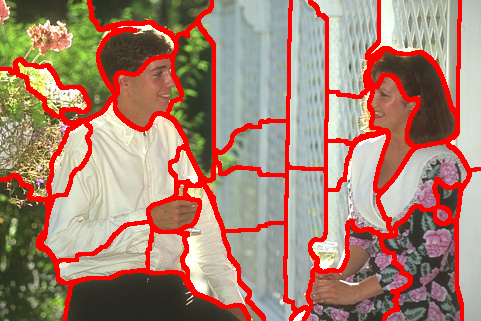}\vskip
    2pt%
    \noindent \includegraphics[width=0.49\linewidth]{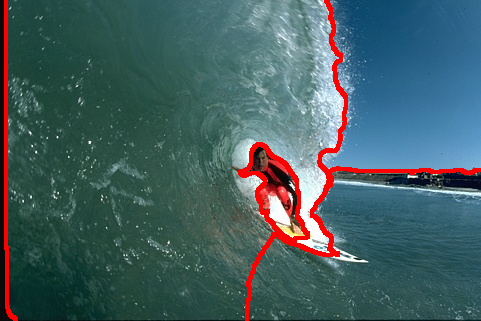}
    \includegraphics[width=0.49\linewidth]{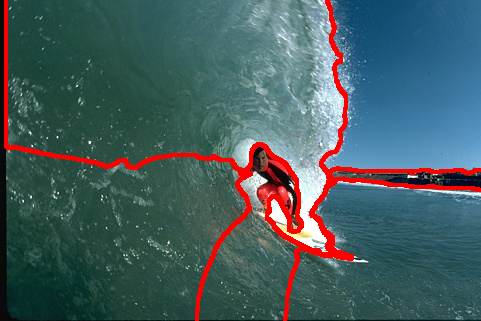}%
    \vskip 2pt%
    \noindent \includegraphics[width=0.49\linewidth]{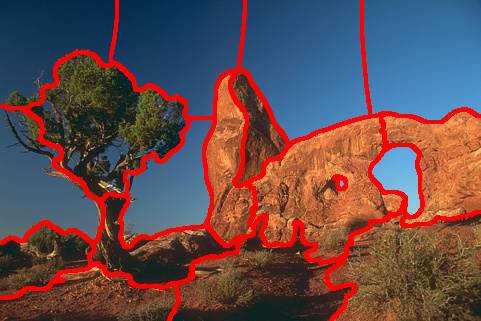}
    \includegraphics[width=0.49\linewidth]{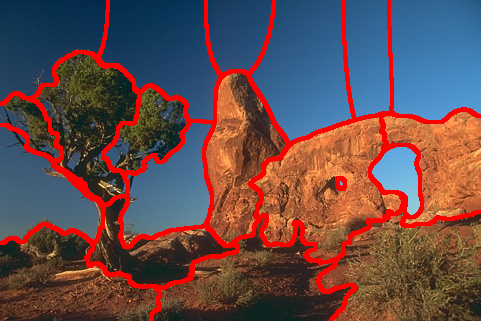}
  \end{minipage}    
  \mycaption[Image segmentations]{\label{fig:seg_images}
    Segmented images. % from the BSDS300 test set.
    Segmentation using \findopt\@-$g_{\tabs}$ (left panels) compared to $k$-means (right panels).
    Red pixels mark the borders between segmented regions.}
\end{figure}
\begin{figure}[tb]
  \centering
  \begin{minipage}{1.0\figwidth}
    \centering
    \includegraphics[width=0.4\linewidth]{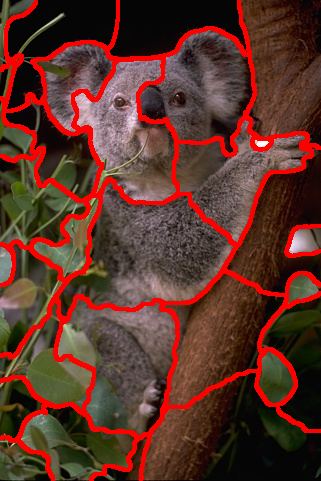}
    \includegraphics[width=0.4\linewidth]{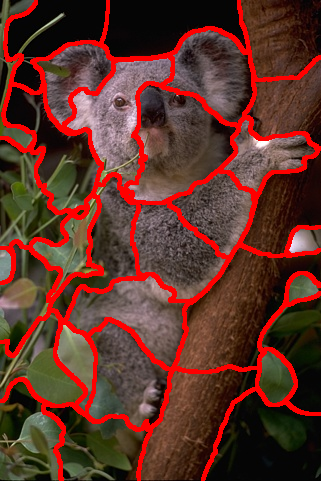}
    \vskip 2pt%
    \noindent
    \includegraphics[width=0.4\linewidth]{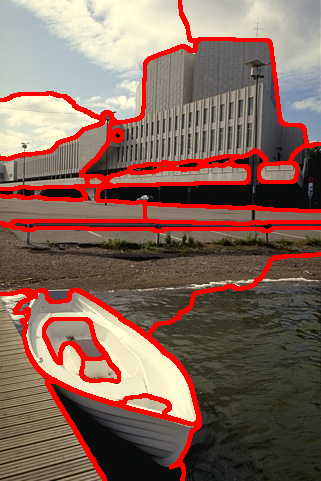}
    \includegraphics[width=0.4\linewidth]{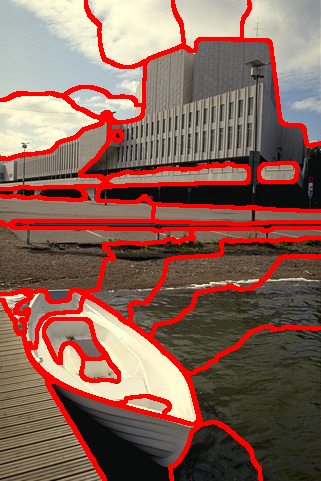}
  \end{minipage}
  \caption[More image segmentations]{Segmented images.
    Segmentation using \findopt\@-$g_{\tabs}$ (left panels) compared to
    $k$-means (right panels).
    Red pixels mark the borders between segmented regions.}
  \label{fig:seg_images2}
\end{figure}

Spectral clustering was first applied to image segmentation by \ifnatbib \citet{ShiMal00}, \else Shi and Malik \cite{ShiMal00}, \fi and it has remained a popular application of spectral clustering.
The goal in image segmentation is to divide an image into regions which represent distinct objects or features of the 
image.
\figurename~\ref{fig:seg_images} and \figurename~\ref{fig:seg_images2} show several segmentations produced by \findopt\@-$g_{\tabs}$ and spherical $k$-means on several example images from the BSDS300 test set~\citep{MartinFTM01}.
%For these images, the similarity matrix was constructed using only color and proximity information.

% \iffalse
For this example application, we used a relatively simple notion of similarity
based only on the color and proximity of the image's pixels.
Let $p_i$ denote the $i$\textsuperscript{th} pixel.
Each $p_i$ has a location $\vec x_i$ and an RGB color $\vec c_i=(r_i,
g_i, b_i)^T$.
We used the following similarity between any two distinct pixels $p_i$
and $p_j$:
\begin{equation} \label{eq:img-similarity}
  a_{ij} =
  \begin{cases}
    e^{-\frac 1 {\alpha^2} \norm{\vec x_i-\vec x_j}^2 }e^{-\frac 1 {\beta^2} \norm{\vec c_i - \vec c_j}^2}  & \text{if $\norm{\vec x_i - \vec x_j} < R$} \\* 
    0 & \text{if $\norm{\vec x_i - \vec x_j} \geq R $}    
  \end{cases}
\end{equation}
for some parameters $\alpha$, $\beta$, and radius $R$.  By enforcing that $a_{ij}$ is 0 for
points which are not too close, we build a sparse similarity
matrix\index{similarity matrix!sparse} which greatly speeds up the computations.  As the similarity measure decays exponentially with distance, the zeroed entries would be very small anyway.

Determining the number of clusters to use in spectral clustering is an
unsolved problem.  However, the BSDS300 data set includes hand labeled
segmentations.  From the hand labeled segmentations for a particular image, one
human segmentation was chosen at random and the number of segments from that segmentation was used
as the number of clusters $\Zpts$ for spectral clustering to search for.  
No other information from the human segmentations was used in generating the image segmentations.

In order to reduce the effect of salt and pepper type noise,
the images were preprocessed using $9 \times 9$ median filtering prior to constructing the similarity matrices.
The similarity from equation~\eqref{eq:img-similarity} was constructed with common fixed values of $\alpha$, $\beta$, and $R$ across all images. % was then used to constructed $L_{\rw}$ for each image.  
Spectral clustering was performed using \findopt\@ under the contrast function $g_{\tabs}$ and the $L_{\rw}$ embedding.
% \fi

Qualitatively, we found that for the same embedding, $k$-means is more likely to over segment large regions within the image, in effect balancing the cluster sizes.
In contrast, our proposed \findopt\@ algorithm tended to segment out additional small regions within the image more frequently.

\mysubsection{Stochastic Block Model with Imbalanced Clusters}
%We now highlight a situation in which our function optimization framework performs significantly better than $k$-means.
We construct a similarity graph $A = \diag(A_1, A_2, A_3) + E$ where each $A_i$ is a symmetric matrix corresponding to a cluster and $E$ is a small perturbation.
We set $A_1 = A_2$ to be $10 \times 10$ matrices with entries $0.1$.
We set $A_3$ to be a $1000 \times 1000$ matrix which is symmetric, approximately 95\% sparse with randomly chosen non-zero locations set to 0.001.
%\begin{vshort}
%  See the supplementary material for a complete description.
%\end{vshort}
%\begin{vlong}
%  The error matrix $E$ is also symmetric with entries chosen uniformly at random from the interval $[0, 0.025]$.
%  From $A$, we used the normalized graph Laplacian $L_{\rw}$ for the spectral embedding step.
%  On the resulting embedding, we compare \findopt\@ implemented with the $g_{sig}$ contrast with the spherical $k$-means algorithm ($k$-means implemented using the cosine distance) for clustering the resulting embedded data.
%\end{vlong}
When performing this experiment $50$ times, \findopt\@-$g_{sig}$ obtained a mean accuracy of 99.9\%.
In contrast, spherical $k$-means with randomly chosen starting points obtained a mean accuracy of only 42.1\%. 
It turns out that splitting the large cluster is in fact optimal in terms of the spherical $k$-means objective function but leads to poor classification performance.
Our method does not suffer from that shortcoming.

\mysubsection{Performance Evaluation on UCI Datasets}
\begin{table*}
  \centering
  \ifnum\version=2
  \mycaption[Spectral clustering accuracy
  comparison]{\label{tab:accuracy-comparison} Percentage accuracy of
    spectral clustering algorithms, with the best performing non-oracle
    algorithm bolded.
    % \vnote{\findenum\@ is reported for the $3\pi/8$ angle choice here.}
  }
  \fi
  \ifnum\version=1 \small \fi
  \begin{tabular}{|r|c|c|c|c|c|c|c|c|c|c|c|c|} \hline
    & oracle-        & $k$-means           & \multicolumn{5}{c|}{\findopt\@} & \multicolumn{5}{c|}{\findenum\@} \\ \cline{4-13}
    & centroids & cosine & $g_{\tabs}$ & $g_{\tgau}$ & $g_3{\phantom '}$ & $g_{\tht}$ & $g_{\tsig}$ & $g_{\tabs}$ & $g_{\tgau}$ & $g_{3}^{\phantom '}$ & $g_{\tht}$ & $g_{\tsig}$  \\ \hline
    % \rowcolor{Gray}
    E.\@ coli & 79.7 & 69.0 & 80.9 & 81.2 & 79.3 & 81.2 & 80.6 & 68.7 & \textbf{81.5} & \textbf{81.5} & 68.7 & \textbf{81.5}  \\
    flags & 33.2 & 33.1 & \textbf{36.8} & 34.1 & 36.6 & \textbf{36.8} & 34.4 & 34.7 & \textbf{36.8} & \textbf{36.8} & 34.7 & \textbf{36.8}  \\
    % \rowcolor{Gray}
    glass & 49.3 & 46.8 & \textbf{47.0} & 46.8 & \textbf{47.0} & \textbf{47.0} & 46.8 & \textbf{47.0} & \textbf{47.0} & \textbf{47.0} & \textbf{47.0} & \textbf{47.0}  \\
    Iris & 84.0 & \textbf{84.0} & 82.8 & 83.4 & 78.5 & 83.4 & 83.2 & 67.3 & 83.3 & 83.3 & 71.3 & \textbf{84.0} \\
    thyroid & 72.4 & 80.4 & \textbf{82.4} & 81.3 & 82.2 & 82.2 & 81.5 & 81.8 & {82.2} & {82.2} & 81.8 & {82.2} \\
    % \rowcolor{Gray}
    car eval & 56.1 & 36.4 & {37.0} & 36.3 & 36.3 & 35.2 & 36.6 & 49.6 & 32.3 & 41.1 & \textbf{49.9} & 41.1  \\
    cell cycle & 74.2 & 62.7 & 64.3 & 64.4 & 63.8 & {64.5} & 64.0 & 60.1 & 62.9 & \textbf{64.8} & 61.1 & 62.7  \\ \hline     
  \end{tabular}
  \ifnum\version=1
  \mycaption[Spectral clustering accuracy
  comparison]{\label{tab:accuracy-comparison} Percentage accuracy of
    spectral clustering algorithms, with the best performing non-oracle
    algorithm bolded.
    % \vnote{\findenum\@ is reported for the $3\pi/8$ angle choice here.}
  }
  \fi
\end{table*}

%%% Local Variables: 
%%% mode: latex
%%% TeX-master: "main"
%%% End: 

% \iffalse
% \afterpage{%
%   \clearpage
%   \begin{landscape}
%     \centering
%     \input{table.tex}
%   \end{landscape}
%   \clearpage
% }
% \if

We compare spectral clustering performance on a number of data sets with unbalanced cluster sizes.
In particular, we use the E.\@ coli, flags, glass, Iris, thyroid disease, and car evaluation data sets which are part of the UCI machine learning repository~\citep{Bache+Lichman:2013}.
We also use the standardized gene expression data set~\citep{yeung2001model,Yeung01supplement}, which is also referred to as cell cycle.
For the flags data set, we used religion as the ground truth labels, and for thyroid disease, we used the new-thyroid data.

For all data sets, we only used fields for which there were no missing
values, we normalized the data such that every field had unit standard
deviation, and we constructed the similarity matrix $A$ using a
Gaussian kernel\index{Gaussian kernel} $k(\vec y_i, \vec y_j) = \exp( - \alpha \norm{\vec y_i
  - \vec y_j}^2 )$.
The parameter $\alpha$ was chosen separately for each data set in order to create a good embedding.
The choices of $\alpha$ were:  0.25 for E.\@ coli, 32 for glass, 0.5 for Iris, 32 for thyroid disease, 128 for flags, 0.25 for car evaluation, and 0.125 for cell cycle.

The spectral embedding was performed using the symmetric normalized Laplacian $L_{\sym}$.
Then, the clustering performance of our proposed algorithms \findopt\@ and \findenum\@ (implemented with $\delta= 3\pi / 8$ radians)
were compared with the following baselines: %\\
\begin{itemize}
\item 
oracle-centroids: The means $\mu_j = \frac 1 {\abs {\calS_j}} \sum_{i \in \calS_j} \frac{x_{i \vecdot}}{\norm{x_{i\vecdot}}}$ are set using the ground truth labels for all $j \in [\edim]$.
Points  are assigned to their nearest cluster mean in cosine distance.
\item 
  $k$-means-cosine: Spherical $k$-means (standard matlab kmeans library function called using the cosine distance and using the default $k$-means++ mean initialization) is run with a random initialization of the means, (cf.\@ \cite{ng2002spectral}).
\end{itemize}

We report the clustering accuracy of each algorithm in Table~\ref{tab:accuracy-comparison}.
The accuracy is computed using the best matching between the clusters and the true labels.
The reported results consist of the mean performance over a set of 25 runs for each algorithm.
The number of clusters being searched for was set to the ground truth number of clusters.
In most cases, our proposed algorithms show improvement in performance over spherical $k$-means.

%%% Local Variables: 
%%% mode: latex
%%% TeX-master: "main.tex"
%%% End:

% \appendix
\section{Basis Recovery With Each Laplacian Embedding}
\label{app:Lsym}

We have already argued that graph Laplacians $L$ and $L_{\rw}$ can be used for spectral clustering within our BEF framework, and we have asserted that $L_{\sym}$ can also be used.
% However, $L_{\sym}$ is also very popular for spectral clustering.
We now discuss how orthogonal BEF recovery can be used for spectral clustering in the setting where $G$ consists of $\Zpts$ connected components using any of the graph Laplacians.
First, in section~\ref{sec:Lsym-null-space}, we show how the Laplacian embedding for the symmetric normalized Laplacian $L_{\sym}$ differs and generalizes upon the embedding structure arising for $L$ and $L_{\rw}$ (cf.\@ Proposition~\ref{prop:discrete-simplex}).
Then, in section~\ref{sec:spectral-contrast-admissibility}, we prove that the spectral embedding induced by any of the discussed graph Laplacians gives rise to an optimization problem on $\sphere^{\edim - 1}$ in which the local maxima enumerate the desired clusters for spectral clustering.
More precisely, we prove a generalization of Theorem~\ref{thm:complete_enumeration} which includes embeddings generated using $L$, $L_{\rw}$, and $L_{\sym}$.

The discussion in this section highlights the differences between
using $L_{\sym}$ and using either $L$ or $L_{\rw}$ for the proposed
spectral algorithms.
% Recall that $L_{\sym}$ is defined from $L$ via $L_{\sym} = D^{-\sfrac 1 2}LD^{-\sfrac 1 2}$.  
Whereas taking an orthogonal basis of
$\NN(L)$ or $\NN(L_{\rw})$ produces embedded points which are orthogonal and of fixed norm within any particular class, using 
$\NN(L_{\sym})$ produces embedded points along perpendicular rays but with varying intra-class norms as will be seen in
Proposition~\ref{app_sym:prop:ray_structure}.  
% The interpretation that the clusters in the embedded space consist of points
% on a perturbed simplex is lost.
%Despite this difference, the basic arguments in the main
%text generalize to the $L_{\sym}$ embedding with a bit more cluttering of symbols.
Despite these differences, when given a contrast function $g$ meeting the strict convexity criterion from Assumption~\ref{assump:strict-convexity}, the proposed algorithms \findopt\@ and \findenum\@ which worked for spectral clustering using $L$ and $L_{\rw}$ also work for spectral clustering using $L_{\sym}$.

\subsection{Null Space Structure of the Normalized Laplacians}\label{sec:Lsym-null-space}
We now investigate the null space\index{graph Laplacian!symmetric normalized!null space} structure of the normalized graph Laplacians.
We will first describe the null space structures $L_{\sym}$ and $L_{\rw}$ for a graph $G$ consisting of $\edim$ components.
Then, we will show how the null space structures of $L_{\sym}$, $L_{\rw}$, and $L$ can all be viewed within a single, more generalized notion of a graph embedding.

Let $G = (V, A)$ be an $\dn$-vertex graph containing $\Zpts$ connected components such that the 
$i$\textsuperscript{th} component has vertices with indices in the 
set $\calS_i$.  For any set $C \subset V$, we define
\begin{equation}\label{eq:def_delta_set}
  \delta_\myD (C) := \sum_{i \in C} \myd_{ii} \ .
\end{equation}
where $\myD = \diag(\myd_{11}, \myd_{22}, \dotsc, \myd_{\dn\dn})$ is a diagonal matrix with strictly positive entries.
For now, we will take $\myD$ to be the diagonal degree matrix $D$ such that $\myd_{ii} = d_{ii} = \sum_{j=1}^\dn a_{ij}$.
Then, $\delta_D(C)$ is the sum of vertex degrees for vertices in the set $C$.
Using this definition, we are able to characterize the embedding structure of $L_{\sym}$.

\begin{prop} \label{app_sym:prop:ray_structure}
  Let $G$ be a similarity graph consisting of $\Zpts$ connected components for which $L_{\sym}$ is well defined.  Let the 
  vertex indices be partitioned into sets $\calS_1, \dotsc, \calS_{\Zpts}$ corresponding to the $\Zpts$ connected components. 
  Then, 
  $\dimop(\NN(L_{\sym})) = \Zpts$.  If $X = (x_{\vecdot 1},
  \dotsc, x_{\vecdot \Zpts})$ contains a scaled basis of $\NN(L_{\sym})$
  in its columns such that $\norm{x_{\vecdot i}} = \sqrt \dn$,
  then there exist
  $\Zpts$ mutually orthogonal unit vectors $\myZv_1, \dotsc, \myZv_{\Zpts}$ such that
  whenever $i \in \calS_j$, the row vector 
  \begin{equation}\label{eq:sym-embpoint-formula}
    x_{i \vecdot} =  \sqrt{\dn d_{ii}\delta_D(\calS_j)^{-1}}\myZv_j^T \ .  
  \end{equation}
\end{prop}
\begin{proof}
An important property of the symmetric normalized Laplacian 
\citep[Proposition 3]{von2007tutorial} is that
for all $\vec u \in \R^\dn$,
\begin{equation}\label{app:eq:Laplacian}
  \vec u^T L_{\sym} \vec u = \frac 1 2 \sum_{i, j  \in V} a_{ij}\left(
      \frac{u_i}{d_{ii}^{\sfrac 1 2}} - \frac{u_j}{d_{jj}^{\sfrac 1 2}} \right)^2 \ .
\end{equation}
$L_{\sym}$ is positive semi-definite, and $\vec u$ is a 
0-eigenvector of $L_{\sym}$ if and only if plugging $\vec u$ into equation~\eqref{app:eq:Laplacian} yields 0.
Let $\vec y_{\calS_j}^{\phantom{*}}$ be the vector such that
\begin{equation}\label{eq:Lsym-eigenvecs-ideal}
  \vec y_{\calS_j}^{\phantom{*}} = \left\{ \begin{array}{ll}
                d_{ii}^{\sfrac 1 2} & \text{if $i \in \calS_j$.} \\
                0       &       \text{otherwise}
              \end{array} \right. \ .
\end{equation}
Then, $B = (\delta_D(\calS_1)^{-\sfrac 1 2} \vec y_{\calS_1}^{\phantom{*}}, \dotsc, \delta_D(\calS_d)^{-\sfrac 1 2} 
\vec y_{\calS_d}^{\phantom{*}} )$ contains
an orthonormal basis for $\NN(L_{\sym})$ in its columns.

Defining $M_{\calS_i} = \vec y_{\calS_i}^{\phantom{*}}\vec y_{\calS_i}^T$, we get:
\begin{equation} \label{app_sym:eq:P_ker_L}
  P_{\NN(L)} = BB^T = \sum_{i = 1}^\Zpts \delta_D(\calS_i)^{-1} M_{\calS_i} \ .
\end{equation}
But $P_{\NN(L_{\sym})}$ can be constructed from any orthonormal 
basis of $\NN(L_{\sym})$. 
In particular, $P_{\NN(L_{\sym})} = \frac 1 \dn XX^T$ as well.  Hence,
$\frac 1 \dn \ipCanonical{x_{i \vecdot}}{x_{j \vecdot}} = (P_{\NN(L)})_{ij} = 
\delta_D(\calS_\ell)^{-1}d_{ii}^{\sfrac 1 2}d_{jj}^{\sfrac 1 2}$ precisely
when there exists $\ell \in [\Zpts]$ such that $i, j \in \calS_\ell$.  Otherwise, 
$x_{i \vecdot} \perp x_{j \vecdot}$.

Note that for $i, j \in \calS_\ell$,
\begin{align*}
 &\cos(\angle(x_{i \vecdot}, x_{j, \vecdot})) 
 = \frac{\ipCanonical{x_{i \vecdot}}{x_{j \vecdot}}}{\norm{x_{i \vecdot}} \norm{x_{j \vecdot}}} \\
 &\quad= \frac{ \dn \delta_D(\calS_\ell)^{-1}_{\phantom{*}} d_{ii}^{\sfrac 1 2} d_{jj}^{\sfrac 1 2} }{(\dn^{\sfrac 1 2} \delta_D(\calS_\ell)^{\sfrac {-1} 2}_{\phantom{*}} d_{ii}^{\sfrac 1 2})(\dn^{\sfrac 1 2}_{\phantom{*}} \delta_D(\calS_\ell)^{\sfrac {-1} 2}_{\phantom{*}} d_{jj}^{\sfrac 1 2})} = 1 \ .
\end{align*} 
Thus, points from the same cluster lie on the same ray from the origin.
It follows that there are $\Zpts$ mutually orthogonal unit vectors, $\myZv_1, \dotsc, \myZv_{\Zpts}$ such that  $x_{i \vecdot} =\sqrt{\dn d_{ii} \delta_D(\calS_{\ell})^{-1}} \myZv_{\ell}^T$ for each $i \in \calS_{\ell}$.
\end{proof}

We will make use of the close connection between the eigenvector structure of $L_{\rw}$ and $L_{\sym}$ in order to characterize the Laplacian embedding structure of $L_{\rw}$.
The following fact can be found in the tutorial \citep[Proposition 3]{von2007tutorial}.
\begin{fact}\label{fact:L-sym-rw-eig-relation}
  $(\lambda, \vec u)$ is an eigenvalue-eigenvector pair of $L_{\rw}$
  if and only if $(\lambda, D^{1/2} \vec u)$ is an
  eigenvalue-eigenvector pair for $L_{\sym}$. 
\end{fact}
By using Fact~\ref{fact:L-sym-rw-eig-relation} and Proposition~\ref{app_sym:prop:ray_structure}, we obtain the embedding structure for $L_{rw}$.\index{graph Laplacian!asymmetric normalized!null space}
\begin{prop}\label{prop:rw-embedding}
  Let the similarity graph $G=(V, A)$ contain $\Zpts$ connected components with indices in the sets $\calS_1, \dotsc, \calS_{\Zpts}$, 
  let $\dn = \abs{V}$, and let 
  $L_{\rw}$ be well defined for $G$.
  Then, $\NN(L_{\rw})$ has dimensionality $\Zpts$.
  Let $X = (x_{\vecdot 1},  \dotsc, x_{\vecdot \Zpts})$ contain $\Zpts$ scaled, 
  orthogonal column vectors forming a basis of $\NN(L_{\rw})$ such that $\norm{x_{\vecdot j}} = \sqrt \dn$ for each $j \in [\Zpts]$.
  Then, there exist weights $w_1, \dotsc, w_{\Zpts}$ with 
  $w_j = \frac{\abs{\calS_j}}{\dn}$ 
  % $w_i = {\abs{\calS_j}}/{\dn}$\lnote{$w_j$?} 
  and
  mutually orthogonal vectors $\myZv_1, \dotsc, \myZv_{\Zpts} \in \R^{\Zpts}$ such that
  whenever $i \in \calS_j$, the row vector $x_{i \vecdot} = \frac 1 {\sqrt{w_j}} \myZv_j^T$.
\end{prop}
\begin{proof}
  By Fact~\ref{fact:L-sym-rw-eig-relation}, we may construct an
  orthogonal basis of $\NN(L_{\rw})$ using a particular choice of
  orthogonal basis of $\NN(L_{\sym})$.
  In particular, we define the vectors $\vec y_{\SS_j}$ the same as in the proof of Proposition~\ref{app_sym:prop:ray_structure}, and we obtain that the vectors $\tilde {\vec y}_{\SS_j}^{\phantom *} := D^{-1/2} \vec y_{\SS_j}^{\phantom *}$ are $0$-eigenvectors of $L_{\rw}$.
  Using equation~\eqref{eq:Lsym-eigenvecs-ideal}, we see that $\vec y_{\SS_j}^{\phantom *} = \One_{\SS_j}$.
  In particular, it follows that $\{ \abs{\SS_1}^{-\sfrac 1 2} \One_{\SS_1}, \dotsc, \abs {\SS_\edim}^{-\sfrac 1 2} \One_{\SS_\edim} \}$ is an orthonormal basis of $\NN(L_{\rw})$.
  From the discussion around
  equation~\eqref{ch-opt:eq:Laplacian-null-space}, it follows that
  $\NN(L)$ and $\NN(L_{\rw})$ are the same space in this setting where
  $G$ consists of $\edim$ connected components.
  Our desired result thus follows from Proposition~\ref{prop:discrete-simplex}.
\end{proof}

We note that the Propositions~\ref{prop:discrete-simplex}, \ref{app_sym:prop:ray_structure}, and \ref{prop:rw-embedding} are closely.
From Propositions~\ref{prop:discrete-simplex} and \ref{prop:rw-embedding}, we see that $L$ and $L_{\rw}$ give rise to the same embedding structure when $G$ consists of $\edim$ connected components.
Further, we may place the embedding structure for $L$ (or equivalently $L_{\rw}$) into the notation used for describing the ray structure of $L_{\sym}$.
In particular, if we let $\myD = \Id$, we see that $\delta_\Id(\SS_j) = \abs{\SS_j}$.
Recalling that $w_j = \frac{\abs{\SS_j}}{\dn}$, we see (by replacing
$D$ with $\Id$ in equation~\eqref{eq:sym-embpoint-formula}) that
$\sqrt{\dn \Id_{ii} \delta_\Id (\SS_j)^{-1}} \myZv_j^T = \frac 1
{\sqrt {w_j}}\myZv_j^T$, which is the required replacement to recreate
the statements of Proposition~\ref{prop:discrete-simplex} and
Proposition~\ref{prop:rw-embedding}.
In particular, we may create a generalized notion of a graph embedding which captures all of the Laplacian embeddings.
\begin{defn}
  Let $G$ be a similarity graph consisting of $\dn$ vertices and $\edim$ connected components such indices partitioned into sets $\SS_1, \dotsc, \SS_\edim$ corresponding to the connected components.
  Let $\myD = \diag(\myd_{11}, \dotsc, \myd_{\dn \dn})$ be a positive definite matrix.
  Let $\varphi$ be a map which takes the $i$\textsuperscript{th} vertex of $G$ to a point $\vec x_i \in \R^\edim$.
  If there exists an orthonormal basis $\myZv_1, \dotsc, \myZv_\edim$ of $\R^\edim$ such that $\vec x_i = \sqrt{\dn \myd_{ii} \delta_{\myD}(\SS_j)^{-1}} \myZv_j$ for each $i \in \SS_j$, then we call $\varphi$ a \emph{($G$, $\myD$)-orthogonal embedding}.\index{spectral embedding!($G$, $\myD$)-orthogonal embedding}
\end{defn}
We see by Proposition~\ref{app_sym:prop:ray_structure}, the Laplacian embedding induced by $L_{\sym}$ is a $(G, D)$-orthogonal embedding; and by Propositions~\ref{prop:discrete-simplex} and~\ref{prop:rw-embedding}, the Laplacian embedding induced by $L$ and $L_{\rw}$ are $(G, \Id)$-orthogonal embedding.

\subsection{Maxima Structure of the Resulting BEFs}\label{sec:spectral-contrast-admissibility}
In this section, we demonstrate that by performing function maximization over the directional projections of embedded data arising from any of the Laplacian embeddings, we are able to recover the desired clusters for spectral clustering.
We will make use of the following construction.

\begin{construction}\label{constr:spectral-Fg-general}
  Let $G$ be a similarity graph consisting of $\dn$ vertices and
  $\edim$ connected components with indices partitioned into the sets
  $\SS_1, \dotsc, \SS_\edim$.
  We suppose that $\myD = \diag(\myd_{11}, \dotsc, \myd_{\dn \dn})$ is
  a positive definite matrix.
  We suppose that $\vec x_1, \dotsc, \vec x_\dn$ is a $(G,
  \myD)$-orthogonal embedding of the vertices of $G$ such that $\vec
  x_i = \sqrt{\dn \myd_{ii} \delta_{\myD}(\SS_j)^{-1}} \myZv_j$ for each
  $i$ in $\SS_j$.
  Parallel to the text of section~\ref{sec:arb_functions}, we
  construct a function $\fg: \sphere^{\Zpts-1} \rightarrow \R$ from a
  continuous contrast function $g:[0, \infty) \rightarrow \R$ where it
  is assumed that $t \mapsto g(\sqrt t)$ is strictly convex (cf.\@
  Assumption~\ref{assump:strict-convexity}).
  We construct $\fg$ as
  \begin{align} \label{eq:app:def_f}
    \fg(\vec u) 
    &:= \frac 1 \dn \sum_{i=1}^\dn g(\abs{\ipCanonical {\vec u}{\vec x_i}}) \notag \\
    &= \frac 1 \dn \sum_{j=1}^\Zpts \sum_{i \in \calS_j}g\left(\norm{\vec x_{i}} \cdot \Abs{\ipCanonical{\vec u}{\myZv_j} }\right) \ .
  \end{align}
  % We assume only that property that $t \mapsto g(\sqrt{t})$ is
  % strictly convex holds (cf.\@
  % Assumption~\ref{assump:strict-convexity}).
\end{construction}
First, we make a couple of comments about
Construction~\ref{constr:spectral-Fg-general}.
Using the discussion at the end of section~\ref{sec:Lsym-null-space},
when $\myD = \Id$ the embedded points $\vec x_i$ can be obtained from
the rows of $X$ in Proposition~\ref{prop:discrete-simplex}, and they
thus correspond to the embedded points arising from $L$.
For this choice of $\myD = \Id$,
Construction~\ref{constr:spectral-Fg-general} is thus a strict
generalization of Construction~\ref{constr:spectral-Fg-L}.
However, Construction~\ref{constr:spectral-Fg-general} also captures $L_{\rw}$ (with $\myD = \Id$ by Proposition~\ref{prop:rw-embedding}) and $L_{\sym}$ (with $\myD = D$ by Proposition~\ref{app_sym:prop:ray_structure}).
% Further, when choosing $\myD = D$ the diagonal degree matrix for the
% vertices of $G$, then the embedded points $\vec x_i$ can be obtained
% from the rows of $X$ in Proposition~\ref{app_sym:prop:ray_structure},
% and they thus correspond to the embedded points arising from
% $L_{\sym}$.

We now wish to generalize Theorem~\ref{thm:complete_enumeration} by
showing that the local maxima of $\fg$ from
Construction~\ref{constr:spectral-Fg-general} are precisely the
directions $\pm \myZv_1, \dotsc, \pm \myZv_\edim$.
We will first argue that $\fg$ has no extraneous maxima, and then that
the direction $\pm \myZv_1, \dotsc, \pm \myZv_\edim$ actually are
maxima.
To see that $\fg$ has no extraneous local maxima, we need only
demonstrate that $\fg$ is an orthogonal BEF satisfying
Assumption~\ref{assump:strict-convexity} and apply
Theorem~\ref{ch-opt:thm:hbfopt_strict_conv}.

\begin{lem} \label{lem:app:lsym_no_extraneous_maxima}
  Let $\fg$ and $\myZv_1, \dotsc, \myZv_\edim$ be as in Construction~\ref{constr:spectral-Fg-general}.
  Then, the local maxima of $\fg$ is contained in the set $\{\pm \myZv_i \suchthat i \in [\Zpts]\}$.
\end{lem}
\begin{proof}
  Define $g_i: \R \rightarrow \R$ by $g_i(t) := \frac 1 \dn \sum_{j \in \calS_i}g(\norm{\vec x_j} \cdot \abs t)$.  Using equation~\eqref{eq:app:def_f}, we obtain
  \begin{equation*}
    \fg(\vec u) 
    = \frac 1 \dn \sum_{i=1}^\Zpts \sum_{j \in \calS_i}g\left(\norm{\vec x_j} \cdot \Abs{\ipCanonical{\vec u}{\myZv_i} }\right)
    = \sum_{i=1}^\Zpts g_i(\ipCanonical{\vec u}{ \myZv_i}) \ .
  \end{equation*}
  Since $t \mapsto g(\sqrt t)$ is strictly convex, it follows that $t \mapsto g_i(\sign(t)\sqrt{\abs t})$ is
  strictly convex for all $i \in [\edim]$.  By Theorem~\ref{ch-opt:thm:hbfopt_strict_conv}, the local maxima of $\fg$ are contained in $\{\pm  \myZv_i : i \in [\Zpts]\}$.
\end{proof}

What remains to be seen is that the directions $\{ \pm  \myZv_i \suchthat i \in [\Zpts] \}$
are local maxima of $\fg$.  
For notational simplicity, we identify $\myZv_1, \dotsc, \myZv_{\edim}$ with the
canonical directions $\vec e_1, \dotsc, \vec e_\edim$ in an unknown coordinate system so that $u_i$ is shorthand for $\ipCanonical{\vec u}{\vec e_i}$.
In our proofs, we exploit the convexity structure induced by the change of variable introduced in the proof of Theorem~\ref{ch-opt:thm:hbfopt_strict_conv}, namely $\psi$ % (see \figurename~\ref{ch-opt:fig:simplex-bijection})
defined by $\psi_i(\vec u) := u_i^2$ which maps the domain $\sphere^{\edim - 1}$ onto the simplex $\Delta^{\Zpts-1} := \conv(\vec e_1, \dotsc, \vec e_\edim)$.
% is homeomorphic to $\Qone := \sphere^{\Zpts-1} \cap [0, \infty)^\Zpts$ under the map $\psi : \Qone \rightarrow \Delta^{\Zpts-1} $ defined by $\psi_i({\vec u}) = u_i^2$.

\begin{lem} \label{lem:app:convex_inner_sufficiency} Let $\vec x_1,
  \dotsc, \vec x_\dn$ be as in
  Construction~\ref{constr:spectral-Fg-general} with the added
  assumption that $\myZv_i = \vec e_i$ for each $i \in [\edim]$.
  Let $h:[0, \infty)\rightarrow \R$ be a strictly convex function.  
  Let $H:\Delta^{\edim-1} \rightarrow \R$ be given by $H({\vec u}) =
  \frac 1 \dn \sum_{i=1}^{\edim} \sum_{j \in \calS_i} h(u_i \norm{\vec
    x_j}^2)$.
  Then the set $\{\vec e_i \suchthat i \in [\Zpts]\}$ is contained in
  the set of strict local maxima of $H$.
\end{lem}

\begin{proof}
  By the symmetries of $H$, it suffices to show that $\vec e_1$ is a strict local
  maximum of $H$.  To see this, choose ${\vec u} \neq \vec e_1$ from a neighborhood
  of $\vec e_1$ relative to $\Delta^{\edim - 1}$ to be specified later.  
  Let $\Lambda_{\vec u} = \{ i \suchthat i \in [\edim] \setminus \{1\}, u_i \neq 0\}$.  Then,
  \begin{align*}
    &H(\vec e_1) - H({\vec u}) \\&= 
    \frac 1 \dn \left[ \sum_{j \in \calS_1} h(\norm{\vec x_j}^2) + \sum_{i=2}^{\Zpts}\sum_{j \in \calS_i} h(0) - \sum_{i = 1}^\Zpts \sum_{j \in \calS_i} h(u_i\norm{\vec x_j}^2) \right] \\
    & = 
    \frac 1 \dn \left[ \sum_{j \in \calS_1} \left(h(\norm{\vec x_j}^2) - h(u_1\norm{\vec x_j^2}) \right) \right. \\*
      &\qquad\qquad\left. - \sum_{i = 2}^\Zpts \sum_{j \in \calS_i}\left( h(u_i\norm{\vec x_j}^2) - h(0) \right) \right] \\
    & = 
    \frac 1 \dn \left[ \sum_{j \in \calS_1} \norm{\vec x_j}^2(1-u_1)\frac{h(\norm{\vec x_j}^2) - h(u_1\norm{\vec x_j^2})}{\norm{\vec x_j}^2(1-u_1)} \right. \\*
    &\qquad\quad \left.
     - \sum_{i \in \Lambda_u} \sum_{j \in \calS_i}u_i\norm{\vec x_j}^2\frac{h(u_i\norm{\vec x_j}^2) - h(0)}{u_i\norm{\vec x_j}^2}  \right] \ .
  \end{align*}
  We have written $H(\vec e_1) - H({\vec u})$ as a weighted sum of difference quotients (slopes).  
  We would like to apply Lemma~\ref{lem:convex-diff-quotient-increasing} in order to demonstrate that there is a neighborhood $B$ of $\vec e_1$ relative to $\Delta^{\edim - 1}$ such that ${\vec u} \in B \setminus \{\vec e_1\}$ implies $H(\vec e_1) - H({\vec u}) > 0$.
  First, we notice that for each $\vec x_j$, ${\vec u}$ breaks the interval into left and right pieces, yielding two slopes of interest:
  \begin{equation*}
    \lslope[ij](\vec u) = \frac{h(u_i\norm{\vec x_j}^2)-h(0)}{u_i\norm{\vec x_j}^2}
  \end{equation*}
  and
  \begin{equation*}
    \rslope[ij](\vec u) = \frac{h(\norm{\vec x_j}^2) - h(u_i\norm{\vec x_j}^2)}{\norm{\vec x_j}^2(1-u_i)} \ .
  \end{equation*}
  Thus, 
  \begin{align*}
    H(\vec e_1) - H({\vec u})
    &= \frac 1 \dn \left[ \sum_{j \in S_1} \norm{\vec
        x_j}^2 (1 - u_1) \rslope[1j](\vec u) \right. \\
    &\qquad\qquad\left.
      - \sum_{i \in \Lambda_{\vec u}}
      \sum_{j \in S_i} u_i \norm{\vec x_j}^2 \lslope[ij](\vec u) \right] \ .
  \end{align*}
  Let $B = \{{\vec u} \suchthat u_i < \frac{\min_j \norm{\vec x_j}^2}{\max_j \norm{\vec x_j}^2} \text{ for all } i \neq 1\} \setminus \{ \vec e_1 \}$.  
  Then, fixing ${\vec u} \in B$ and $i \neq 1$, we have that $u_i\norm{\vec x_{j_1}}^2 < \norm{\vec x_{j_2}}^2$ for any $j_1 \in \calS_i$ and $j_2 \in \calS_1$.
  Let $\lslope[\max](\vec u) := \max \{ \lslope[ij](\vec u) \suchthat i \in \Lambda_{\vec u}, j \in \calS_i \}$
  and $\rslope[\min](\vec u) := \min \{ \rslope[1j](\vec u) \suchthat j \in \calS_1 \}$.
  From Lemma~\ref{lem:convex-diff-quotient-increasing}, it follows that $\lslope[\max](\vec u) < \rslope[\min](\vec u)$ for all $\vec u \in B$.
  Thus,
  \begin{align*}
    H(\vec e_1) - H({\vec u}) &\geq \frac 1 \dn \left[ \sum_{j \in S_1} \norm{\vec x_{j}}^2 (1 - u_1) \rslope[\min](\vec u) \right. \\*
    &\left. \qquad\qquad
      - \sum_{i \in \Lambda_{\vec u}} \sum_{j \in S_i} {\vec u}_i \norm{\vec x_{j}}^2 \lslope[\max](\vec u) \right] \\
    &= (1 - u_1) \rslope[\min](\vec u) -\sum_{i = 2}^{\edim} u_i \lslope[\max](\vec u) \\
    &= (1-u_1)[\rslope[\min](\vec u) - \lslope[\min](\vec u)] > 0
  \end{align*}
  where the first equality uses that $\sum_{j \in \SS_i} \norm{x_j}^2 = \dn \big(\sum_{j \in \SS_i} \myd_{jj}\big) \delta_{\myD}(\SS_j) = \dn$ for all $j \in [\edim]$.
  It follows that $\vec e_1$ is a local maximum of $H$.
\end{proof}

\begin{thm} \label{app_sym:thm:complete_enumeration} 
  % Let $g:[0, \infty) \rightarrow \R$ be a continuous function such that $t \mapsto g(\sqrt t)$ is strictly convex.  
  In Construction~\ref{constr:spectral-Fg-general}, $\{\pm \myZv_i \suchthat i
  \in [\edim]\}$ is a complete enumeration of the local maxima of
  $\fg$.
\end{thm}
\begin{proof}
  Let $\Lambda$ denote the set of local maxima of $\fg$.  That $\Lambda \subset 
  \{\pm  \myZv_i \suchthat i \in [\edim]\}$ is immediate from Lemma
  \ref{lem:app:lsym_no_extraneous_maxima}.  To see that
  $\Lambda \supset \{\pm  \myZv_i : i \in [\edim]\}$, we note that there is a
  natural mapping between $\Delta^{\Zpts-1}$ and a quadrant of $\sphere^{\edim - 1}$.
  
  The set $\{\pm  \myZv_i \suchthat i \in [\edim] \}$ gives an unknown, orthonormal basis of our space.  We may without loss of generality work in the coordinate system where
  $\vec e_1, \dotsc, \vec e_{\edim}$ coincide with $ \myZv_1, \dotsc,  \myZv_{\edim}$.
  Let $Q_1 = \sphere^{\edim-1} \cap [0, \infty)^{\edim - 1}$ give the first quadrant of
  the unit sphere.
  By the symmetries of the problem, it suffices to show that 
  $\{\vec e_1, \dotsc, \vec e_{\edim}\}$ are maxima of $\fg$.  However, the map
  $\psi : Q_1 \rightarrow \Delta^{\edim-1}$ defined by $(\psi({\vec u}))_i = u_i^2$
  is a homeomorphism.  Defining $H : \Delta^{\edim - 1} \rightarrow \R$ by
  $H(\vec t) = \fg(\psi^{-1}(\vec t))$, then $\vec t \in \Delta^{\edim - 1}$ is a local maximum 
  of $H$ if and only if $\psi^{-1}(\vec t)$ is a local maximum of $\fg$ relative to $Q_1$.
  
  Note that $H(\vec t) = \frac 1 \dn \sum_{i=1}^{\edim} \sum_{j \in \calS_i} g(\sqrt{t_i \norm{\vec x_{j}}^2})$.
  As $y \mapsto g(\sqrt{y})$ is convex, it follows by 
  Lemma~\ref{lem:app:convex_inner_sufficiency} that 
  $\{\vec e_i\}_{i=1}^{\edim}$ are local maxima of $H$.  Hence, using the symmetries of $\fg$,
  $\{\pm  \myZv_i \suchthat i \in [\edim]\} \supset \Lambda$.
\end{proof}

With Theorem~\ref{app_sym:thm:complete_enumeration} in hand, it is now
straight forward to generalize Theorem~\ref{thm:complete_enumeration}
to demonstrate that the spectral embedding arising from any of the
graph Laplacians is compatible with the proposed BEF function
maximization framework for clustering within the embedded space.
\begin{thm}\label{thm:spectralembBEF-complete-enumeration-general}
  Suppose that $G$ is a graph consisting of $\dn$ vertices and $\edim$ connected components with indices in the sets $\SS_1, \dotsc, \SS_\edim$.
  Let $\mathcal L$ be a (well defined) graph Laplacian chosen among $L$, $L_{\rw}$, or $L_{\sym}$ constructed from $G$.
  If $X \in \R^{\dn \times \edim}$ is such that its columns $x_{\vecdot i}$ form an orthogonal subspace of $\NN(\LL)$ scaled such that $\norm{x_{\vecdot i}} = \sqrt \dn$, then there exists an orthonormal basis $\myZv_1, \dotsc, \myZv_\edim$ of $\R^\edim$ such that
  \begin{enumerate}
  \item\label{thm:spectralembBEF-complete-enumeration-general-1} For
    each $j \in \SS_i$, $x_{j \vecdot}^T$ lies on the ray starting at
    the origin and going through $\myZv_i$.
  \item\label{thm:spectralembBEF-complete-enumeration-general-2} If we
    define $\fg: \sphere^{\edim - 1} \rightarrow \R$ by $\fg(\vec u) =
    \frac 1 \dn \sum_{i=1}^\dn g(\ipCanonical{\vec u}{x_{i \vecdot}})$
    from a contrast $g : [0, \infty) \rightarrow \R$ satisfying that
    $t \mapsto g(\sqrt t)$ is strictly convex, then the directions
    $\{\pm \myZv_i \suchthat i \in [\edim]\}$ provide a complete
    enumeration of the local maxima of $\fg$ on $\sphere^{\edim - 1}$.
  \end{enumerate}
\end{thm}
\begin{proof}
  Part~\ref{thm:spectralembBEF-complete-enumeration-general-1} follows
  from the combination of Propositions~\ref{prop:discrete-simplex},
  \ref{app_sym:prop:ray_structure}, and~\ref{prop:rw-embedding}.
  Part~\ref{thm:spectralembBEF-complete-enumeration-general-2} follows
  from Theorem~\ref{app_sym:thm:complete_enumeration} along with the
  observation that Construction~\ref{constr:spectral-Fg-general}
  captures the given $\fg$ irregardless of which of the 3 graph
  Laplacians is used to construct $\fg$ (see
  Construction~\ref{constr:spectral-Fg-general} and the surrounding
  discussion).
\end{proof}

\ifnum\version=1
\appendix
\section{Facts About Convex Functions}
\fi
\ifnum\version=2
\appendix[Facts About Convex Functions]
\fi
\label{app:facts-convex}
In this section, intervals can be open, half open, or closed.

There is a large literature studying the properties of convex functions.
As strict convexity is considered more special than convexity, results
are typically stated in terms of convex functions.  The following characterization
of strict convexity is a version of Proposition 1.1.4 of \citep{hiriart1996convex} for strictly convex
functions, and can be proven in a similar fashion.  
\begin{lem} \label{lem:convexity_increasing}
  For an interval $I$, let $f:I \rightarrow \R$ be a strictly convex function.  Then,
  fixing any $x_0 \in I$, the slope function defined by
  %  \begin{equation*}
  $\SCHAR(x) := \frac{f(x) - f(x_0)}{x - x_0}$
  %  \end{equation*}
  is strictly increasing on $I \setminus \{ x_0 \}$.
\end{lem}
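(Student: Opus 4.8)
The plan is to reduce the claimed strict monotonicity of the slope function to the classical ``three–chord'' inequalities for a strictly convex function, which in turn follow from the definition of strict convexity by a single algebraic substitution. First I would record the basic inequality: for any $a < b < c$ in $I$, writing $b = \lambda a + (1-\lambda)c$ with $\lambda = \frac{c-b}{c-a} \in (0,1)$, strict convexity gives $f(b) < \frac{c-b}{c-a} f(a) + \frac{b-a}{c-a} f(c)$. Rearranging this one inequality — moving $f(a)$ (respectively $f(c)$) to the left–hand side and dividing by the positive quantity $b-a$ (respectively $c-b$) — yields the two strict slope inequalities
\[
\frac{f(b)-f(a)}{b-a} < \frac{f(c)-f(a)}{c-a} < \frac{f(c)-f(b)}{c-b}.
\]
Chaining these also gives the outer bound $\frac{f(b)-f(a)}{b-a} < \frac{f(c)-f(b)}{c-b}$.

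With these inequalities in hand, I would fix $x_1 < x_2$ in $I \setminus \{x_0\}$ and prove $\SCHAR(x_1) < \SCHAR(x_2)$ by splitting into the three cases determined by the position of $x_0$. If $x_0 < x_1 < x_2$, apply the inequalities with $(a,b,c) = (x_0, x_1, x_2)$; the left inequality is exactly $\SCHAR(x_1) < \SCHAR(x_2)$. If $x_1 < x_0 < x_2$, take $(a,b,c) = (x_1, x_0, x_2)$ and observe that $\SCHAR(x_1) = \frac{f(x_1)-f(x_0)}{x_1-x_0} = \frac{f(b)-f(a)}{b-a}$ while $\SCHAR(x_2) = \frac{f(c)-f(b)}{c-b}$, so the outer inequality above gives the claim. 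If $x_1 < x_2 < x_0$, take $(a,b,c) = (x_1, x_2, x_0)$ and note $\SCHAR(x_1) = \frac{f(c)-f(a)}{c-a}$ and $\SCHAR(x_2) = \frac{f(c)-f(b)}{c-b}$, so the right inequality applies.

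The only delicate point — and the step I would be most careful about — is the sign bookkeeping in the second and third cases, where $x_0$ is not the smallest of the three abscissae: there one must rewrite each slope $\frac{f(x_i)-f(x_0)}{x_i-x_0}$ by cancelling a common sign in numerator and denominator so that it coincides with one of the normalized difference quotients appearing in the three–chord inequalities. Once that identification is made each case is immediate, and since the three cases are exhaustive the slope function is strictly increasing on $I \setminus \{x_0\}$. I do not expect any genuine obstacle beyond this routine case analysis, as the entire lemma rests on the single convex–combination inequality established at the outset.
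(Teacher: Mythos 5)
Your proof is correct: the three--chord inequalities are derived properly from the definition of strict convexity, and the three-case analysis (with the sign cancellation when $x_0$ is not the leftmost point) is exactly what is needed. The paper itself gives no proof of this lemma --- it only cites Proposition 1.1.4 of Hiriart-Urruty and Lemar\'echal and says the strict version "can be proven in a similar fashion" --- and your argument is precisely that standard proof, so there is nothing further to reconcile.
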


The following result is largely a consequence of Lemma \ref{lem:convexity_increasing}.
\begin{lem} \label{lem:convex-diff-quotient-increasing}
  Let $I$ be an interval and let $f: I \rightarrow \R$ be a convex function.  Suppose
  that $(a, b) \subset I$ and $(c, d) \subset I$ are such that $a \leq c$ and 
  $b \leq d$ with at least one of the inequalities being strict.  Then,
  \begin{equation*}
  \frac{f(b) - f(a)}{b - a} < \frac{f(d) - f(c)}{d - c}
  \end{equation*}
  \begin{proof}
    If $c=a$, then $\frac{f(d)-f(a)}{d-a} = \frac{f(d) - f(c)}{d-c}$ trivially.  Otherwise,
    $a < c$, and by Lemma \ref{lem:convexity_increasing}, we have that
    $\frac{f(d) - f(a)}{d-a} < \frac{f(d) - f(c)}{d-c}$
    By similar reasoning, $\frac{f(b) - f(a)}{b-a} \leq \frac{f(d) - f(a)}{d-a}$ (with equality if and only if $d = b$).
    As by assumption, $a=b$ and $c=d$ cannot both hold, it follows that
    $\frac{f(b) - f(a)}{b-a} \leq \frac{f(d) - f(a)}{d-a} \leq \frac{f(d) - f(c)}{d-c}$ with at least one of the inequalities being strict.
  \end{proof}
\end{lem}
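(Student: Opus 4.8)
The statement is the classical monotonicity of chord slopes for a convex function, and the plan is to derive it directly from Lemma~\ref{lem:convexity_increasing} by varying one endpoint at a time. The idea is to insert the intermediate chord over $[a,d]$ and compare the chord over $[a,b]$ and the chord over $[c,d]$ to it in two separate steps, each of which keeps one endpoint fixed so that Lemma~\ref{lem:convexity_increasing} applies verbatim.

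First I would record that the intervals are nondegenerate: since $(a,b)$ and $(c,d)$ are nonempty open subintervals of $I$ we have $a<b$ and $c<d$, and since $a\le c$ and $c<d$ we also have $a<d$; thus all three difference quotients over $[a,b]$, $[a,d]$, and $[c,d]$ are well defined and each has distinct endpoints. Applying Lemma~\ref{lem:convexity_increasing} with the fixed point $x_0=a$, the slope $x\mapsto \frac{f(x)-f(a)}{x-a}$ is increasing on $I\setminus\{a\}$; since $b\le d$ this gives
\begin{equation*}
\frac{f(b)-f(a)}{b-a}\ \le\ \frac{f(d)-f(a)}{d-a},
\end{equation*}
with strict inequality whenever $b<d$. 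Next, applying Lemma~\ref{lem:convexity_increasing} with the fixed point $x_0=d$, the slope $x\mapsto \frac{f(x)-f(d)}{x-d}=\frac{f(d)-f(x)}{d-x}$ is increasing, and since $a\le c<d$ this yields
\begin{equation*}
\frac{f(d)-f(a)}{d-a}\ \le\ \frac{f(d)-f(c)}{d-c},
\end{equation*}
with strict inequality whenever $a<c$. Chaining the two displays produces the desired comparison between the chord over $[a,b]$ and the chord over $[c,d]$.

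Finally I would close the argument on the degenerate endpoints: if $b=d$ the first comparison is a trivial equality and the lemma is invoked only in the second step, while if $a=c$ the roles are reversed. Since the hypothesis forces at least one of $a<c$ or $b<d$, at least one of the two comparisons is a genuine strict application of Lemma~\ref{lem:convexity_increasing}, so the chained inequality is strict. The only real obstacle is securing this strictness rather than a mere $\le$: it is exactly here that the strict convexity underlying Lemma~\ref{lem:convexity_increasing} is essential (for an affine $f$ all chord slopes coincide and the strict conclusion would fail), so the chaining through the intermediate chord $[a,d]$ serves mainly to bookkeep which of the two endpoint moves actually contributes the strict step.
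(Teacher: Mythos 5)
Your proof is correct and follows essentially the same route as the paper's: both chain through the intermediate chord over $[a,d]$ and apply Lemma~\ref{lem:convexity_increasing} twice, once with the left endpoint fixed and once with the right, observing that at least one of the two comparisons must be strict. Your remark that the strict conclusion genuinely requires the strict convexity underlying Lemma~\ref{lem:convexity_increasing} (the lemma's hypothesis says only ``convex'') is a fair catch, but the argument itself matches the paper's.
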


The following result comes from Remark 4.2.2 of \ifnatbib \citet{hiriart1996convex}. \else Hiriart-Urruty and Lemar\'echal \cite{hiriart1996convex}. \fi
\begin{lem} \label{lem:continuity_derivs}
  Given an interval $I$ and a function $f: I \rightarrow \R$, then the
  left derivative $\partial_{-} f$ is left-continuous and the right derivative
  $\partial_{+} f$ is right-continuous respectively whenever they are defined
  (that is, finite).
\end{lem}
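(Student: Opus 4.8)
The plan is to prove the statement under the standing hypothesis of this appendix that $f$ is convex (for an arbitrary $f$ the claim is false), and to reduce the two assertions to one: I will show that $D_- f$ is left-continuous, after which right-continuity of $D_+ f$ follows by applying the same result to the reflected convex function $x \mapsto f(-x)$, which interchanges the roles of the two one-sided derivatives. Throughout I restrict attention to a point $x_0$ in the interior of $I$ at which $D_- f(x_0)$ is finite, as the hypothesis stipulates. The two standard facts I would lean on are already essentially at hand: first, $f$ is continuous at $x_0$, since convex functions are continuous on the interior of their domain; second, for a convex $f$ the difference quotient $s \mapsto \frac{f(x) - f(s)}{x - s}$ anchored at $x$ is nondecreasing in $s$, which is the non-strict analogue of Lemma~\ref{lem:convexity_increasing}. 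In particular $D_- f$ is itself a nondecreasing function of $x$.

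First I would dispatch the easy inequality. Since $D_- f$ is nondecreasing, the one-sided limit $L := \lim_{x \to x_0^-} D_- f(x)$ exists and satisfies $L \le D_- f(x_0)$. It therefore remains only to establish the reverse inequality $L \ge D_- f(x_0)$.

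For the reverse inequality I would use an \emph{anchored} secant. Fix $\eps > 0$. Because $\frac{f(x_0) - f(y)}{x_0 - y}$ increases to $D_- f(x_0)$ as $y \to x_0^-$, I can choose $y_0 < x_0$ with $\frac{f(x_0) - f(y_0)}{x_0 - y_0} > D_- f(x_0) - \eps$. Now for any $x$ with $y_0 < x < x_0$, monotonicity of the difference quotient based at $x$ gives $D_- f(x) = \lim_{s \to x^-} \frac{f(x) - f(s)}{x - s} \ge \frac{f(x) - f(y_0)}{x - y_0}$, since $y_0 < x$. Letting $x \to x_0^-$ and invoking continuity of $f$ at $x_0$ yields $L \ge \frac{f(x_0) - f(y_0)}{x_0 - y_0} > D_- f(x_0) - \eps$. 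As $\eps > 0$ was arbitrary, $L \ge D_- f(x_0)$, and combined with the easy inequality this gives $L = D_- f(x_0)$, i.e.\ left-continuity of $D_- f$ at $x_0$.

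The main obstacle is exactly this reverse inequality: a naive attempt to bound $D_- f(x)$ from below by letting both secant endpoints collapse together loses all control in the limit $x \to x_0^-$. The key idea is to freeze the left endpoint at a single well-chosen $y_0$, so that the lower bound $\frac{f(x) - f(y_0)}{x - y_0}$ is a fixed secant whose limit is computable by continuity of $f$, and the $\eps$ built into the choice of $y_0$ is precisely what recovers $D_- f(x_0)$ in the limit. The only remaining care is routine bookkeeping at the endpoints of $I$ and at points where a one-sided derivative is infinite, both of which the hypothesis sidesteps by restricting to where $D_\pm f$ is finite.
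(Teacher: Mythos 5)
Your proof is correct. The paper itself gives no argument for this lemma --- it simply cites Remark 4.2.2 of Hiriart-Urruty and Lemar\'echal --- and what you have written is the standard proof of that fact: monotonicity of the one-sided derivative gives the easy inequality $L \le D_- f(x_0)$, and the anchored secant through a fixed $y_0$, combined with continuity of $f$, gives the reverse one. Your reduction of the $D_+ f$ case via $x \mapsto f(-x)$ is also fine, and the endpoint/infinite-derivative cases are correctly excluded by the lemma's ``whenever finite'' hypothesis (indeed, finiteness of $D_- f$ at a right endpoint already forces one-sided continuity of $f$ there, so the argument even extends to that case).
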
 

%%% Local Variables: 
%%% mode: latex
%%% TeX-master: "main.tex"
%%% End: 

\section*{Acknowledgements}
This work was supported by NSF grants IIS 1117707, CCF 1350870, and CCF 1422830.

\ifnatbib
\bibliographystyle{abbrvnat}
\else
\bibliographystyle{IEEEtran}
\fi
\bibliography{biblio}

\begin{thebibliography}{25}
\providecommand{\natexlab}[1]{#1}
\providecommand{\url}[1]{\texttt{#1}}
\expandafter\ifx\csname urlstyle\endcsname\relax
  \providecommand{\doi}[1]{doi: #1}\else
  \providecommand{\doi}{doi: \begingroup \urlstyle{rm}\Url}\fi

\bibitem[Anderson et~al.(2013)Anderson, Goyal, and Rademacher]{luis13}
J.~Anderson, N.~Goyal, and L.~Rademacher.
\newblock Efficient learning of simplices.
\newblock In \emph{COLT}, pages 1020--1045, 2013.

\bibitem[Bach and Jordan(2006)]{DBLP:journals/jmlr/BachJ06}
F.~R. Bach and M.~I. Jordan.
\newblock Learning spectral clustering, with application to speech separation.
\newblock \emph{Journal of Machine Learning Research}, 7:\penalty0 1963--2001,
  2006.

\bibitem[Bache and Lichman(2013)]{Bache+Lichman:2013}
K.~Bache and M.~Lichman.
\newblock {UCI} machine learning repository, 2013.
\newblock URL \url{http://archive.ics.uci.edu/ml}.

\bibitem[Belkin and Niyogi(2003)]{BN03}
M.~Belkin and P.~Niyogi.
\newblock Laplacian eigenmaps for dimensionality reduction and data
  representation.
\newblock \emph{Neural Comput.}, 15\penalty0 (6):\penalty0 1373--1396, 2003.
\newblock ISSN 0899-7667.

\bibitem[Davis and Kahan(1970)]{davis1970rotation}
C.~Davis and W.~M. Kahan.
\newblock The rotation of eigenvectors by a perturbation. iii.
\newblock \emph{SIAM Journal on Numerical Analysis}, 7\penalty0 (1):\penalty0
  1--46, 1970.

\bibitem[Deuflhard et~al.(2000)Deuflhard, Huisinga, Fischer, and
  Sch{\"u}tte]{deuflhard2000identification}
P.~Deuflhard, W.~Huisinga, A.~Fischer, and C.~Sch{\"u}tte.
\newblock Identification of almost invariant aggregates in reversible nearly
  uncoupled markov chains.
\newblock \emph{Linear Algebra and its Applications}, 315\penalty0
  (1):\penalty0 39--59, 2000.

\bibitem[Gritzmann and Klee(1989)]{gritzmann19890}
P.~Gritzmann and V.~Klee.
\newblock On the 0--1-maximization of positive definite quadratic forms.
\newblock In \emph{Operations Research Proceedings 1988}, pages 222--227.
  Springer, 1989.

\bibitem[Hiriart-Urruty and Lemar{\'e}chal(1996)]{hiriart1996convex}
J.-B. Hiriart-Urruty and C.~Lemar{\'e}chal.
\newblock \emph{Convex Analysis and Minimization Algorithms: Part 1:
  Fundamentals}, volume~1.
\newblock Springer, 1996.

\bibitem[Hsu and Kakade(2013)]{hsu2013learning}
D.~Hsu and S.~M. Kakade.
\newblock Learning mixtures of spherical {G}aussians: {M}oment methods and
  spectral decompositions.
\newblock In \emph{Proceedings of the 4th conference on Innovations in
  Theoretical Computer Science (ITCS)}, pages 11--20. ACM, 2013.

\bibitem[Hyv{\"a}rinen et~al.(2004)Hyv{\"a}rinen, Karhunen, and
  Oja]{hyvarinen2004independent}
A.~Hyv{\"a}rinen, J.~Karhunen, and E.~Oja.
\newblock \emph{Independent component analysis}, volume~46.
\newblock John Wiley \& Sons, 2004.

\bibitem[Jain and Dubes(1988)]{Jain88}
A.~K. Jain and R.~C. Dubes.
\newblock \emph{Algorithms for clustering data}.
\newblock Prentice-Hall, Inc., Upper Saddle River, NJ, USA, 1988.
\newblock ISBN 0-13-022278-X.

\bibitem[Kumar et~al.(2013)Kumar, Narasimhan, and Ravindran]{ravindran13icml}
P.~Kumar, N.~Narasimhan, and B.~Ravindran.
\newblock Spectral clustering as mapping to a simplex.
\newblock \emph{2013 ICML workshop on Spectral Learning}, 2013.

\bibitem[Martin et~al.(2001)Martin, Fowlkes, Tal, and Malik]{MartinFTM01}
D.~R. Martin, C.~Fowlkes, D.~Tal, and J.~Malik.
\newblock A database of human segmented natural images and its application to
  evaluating segmentation algorithms and measuring ecological statistics.
\newblock In \emph{ICCV}, pages 416--425, 2001.

\bibitem[Meil{\u{a}} and Shi(2001)]{MeiShi01}
M.~Meil{\u{a}} and J.~Shi.
\newblock A random walks view of spectral segmentation.
\newblock In \emph{AI and Statistics (AISTATS)}, 2001.

\bibitem[Ng et~al.(2002)Ng, Jordan, and Weiss]{ng2002spectral}
A.~Y. Ng, M.~I. Jordan, and Y.~Weiss.
\newblock On spectral clustering: Analysis and an algorithm.
\newblock \emph{Advances in neural information processing systems}, 2:\penalty0
  849--856, 2002.

\bibitem[Rockafellar(1997)]{MR1451876}
R.~T. Rockafellar.
\newblock \emph{Convex analysis}.
\newblock Princeton Landmarks in Mathematics. Princeton University Press,
  Princeton, NJ, 1997.
\newblock ISBN 0-691-01586-4.
\newblock Reprint of the 1970 original, Princeton Paperbacks.

\bibitem[Shi and Malik(2000)]{ShiMal00}
J.~Shi and J.~Malik.
\newblock Normalized cuts and image segmentation.
\newblock \emph{IEEE Transactions on Pattern Analysis and Machine
  Intelligence}, 22\penalty0 (8):\penalty0 888--905, 2000.

\bibitem[Verma and Meil{\u{a}}(2003)]{Verma03acomparison}
D.~Verma and M.~Meil{\u{a}}.
\newblock A comparison of spectral clustering algorithms.
\newblock Technical report, University of Washington CSE Department, Seattle,
  WA 98195-2350, 2003.
\newblock doi=10.1.1.57.6424, Accessed online via CiteSeerx 5 Mar 2014.

\bibitem[Von~Luxburg(2007)]{von2007tutorial}
U.~Von~Luxburg.
\newblock A tutorial on spectral clustering.
\newblock \emph{Statistics and computing}, 17\penalty0 (4):\penalty0 395--416,
  2007.

\bibitem[Voss et~al.(2016)Voss, Belkin, and Rademacher]{BelkinRV14b}
J.~Voss, M.~Belkin, and L.~Rademacher.
\newblock The hidden convexity of spectral clustering.
\newblock In \emph{Thirtieth AAAI Conference on Artificial Intelligence}, pages
  2108--2114, 2016.

\bibitem[Weber et~al.(2004)Weber, Rungsarityotin, and Schliep]{weber2004perron}
M.~Weber, W.~Rungsarityotin, and A.~Schliep.
\newblock \emph{Perron cluster analysis and its connection to graph
  partitioning for noisy data}.
\newblock Konrad-Zuse-Zentrum f{\"u}r Informationstechnik Berlin, 2004.

\bibitem[Wei(2015)]{wei2015study}
T.~Wei.
\newblock A study of the fixed points and spurious solutions of the
  deflation-based fastica algorithm.
\newblock \emph{Neural Computing and Applications}, pages 1--12, 2015.
\newblock ISSN 0941-0643.
\newblock \doi{10.1007/s00521-015-2033-6}.
\newblock URL \url{http://dx.doi.org/10.1007/s00521-015-2033-6}.

\bibitem[Yeung et~al.(2001{\natexlab{a}})Yeung, Fraley, Murua, Raftery, and
  Ruzzo]{Yeung01supplement}
K.~Y. Yeung, C.~Fraley, A.~Murua, A.~E. Raftery, and W.~L. Ruzzo.
\newblock Model-based clustering and data transformations for gene expression
  data supplementary web site.
\newblock \url{http://faculty.washington.edu/kayee/model/}, 2001{\natexlab{a}}.
\newblock Accessed: 20 Jan 2015.

\bibitem[Yeung et~al.(2001{\natexlab{b}})Yeung, Fraley, Murua, Raftery, and
  Ruzzo]{yeung2001model}
K.~Y. Yeung, C.~Fraley, A.~Murua, A.~E. Raftery, and W.~L. Ruzzo.
\newblock Model-based clustering and data transformations for gene expression
  data.
\newblock \emph{Bioinformatics}, 17\penalty0 (10):\penalty0 977--987,
  2001{\natexlab{b}}.

\bibitem[Yu and Shi(2003)]{yu2003multiclass}
S.~X. Yu and J.~Shi.
\newblock Multiclass spectral clustering.
\newblock In \emph{Computer Vision, 2003. Proceedings. Ninth IEEE International
  Conference on (ICCV)}, pages 313--319, 2003.

\end{thebibliography}

\ifnum\version=2
\begin{IEEEbiography}[{\includegraphics[width=1in,height=1.25in,clip,keepaspectratio]{photos/jimmy}}]{James Voss}
  is a PhD student in the Department of Computer Science and
  Engineering at the Ohio State University.
  Voss received an MS in computer science in 2014 from Ohio State.
  His research interests include machine learning theory, independent
  component analysis, and unsupervised learning techniques.
\end{IEEEbiography}

\begin{IEEEbiography}{Mikhail Belkin}
  Mikhail Belkin is an Associate Professor in the Department of
  Computer Science and Engineering and in the Department of Statistics
  at the Ohio State University.
  He received his PhD in 2003 from the Department of
  Mathematics at the University of Chicago.
  His research interests are in the theory and applications of machine
  learning and data analysis.
  His work includes the Laplacian Eigenmaps algorithm, which brought
  ideas from classical differential geometry to data analysis and
  Polynomial Learning of Distribution Families, which used
  semi-algebraic geometry for provable learning of Gaussian mixture
  distributions. 
\end{IEEEbiography}

\begin{IEEEbiography}[{\includegraphics[width=1in,height=1.25in,clip,keepaspectratio]{photos/luis}}]{Luis Rademacher}
  graduated with a PhD in mathematics from the Massachusetts Institute of
  Technology in 2007 under the guidance of Santosh Vempala.
  He spent two years as a Postdoctoral Fellow in the College of Computing at
  the Georgia Institute of Technology.
  He joined the Computer Science and Engineering Department at The Ohio State
  University in 2009.
  His interests lie around computational learning theory and random structures
  and algorithms.
\end{IEEEbiography}
\vfill
\fi

\end{document}

% \ifnum\final=0
% \clearpage\phantomsection
% \printindex
% \fi

%%% Local Variables: 
%%% mode: latex
%%% TeX-master: "main"
%%% End: 